\documentclass{article}
%paragrpah spacing
%\setlength{\parindent}{0em}
\setlength{\parskip}{1em}

%importing packages
\usepackage{fullpage}
\usepackage{amssymb}
\usepackage{amsmath}
\usepackage{amsthm}
\usepackage{multirow}
\usepackage{enumerate}
\usepackage{graphicx, subcaption}
\usepackage{mathrsfs}
\usepackage[utf8]{inputenc}
\usepackage{cite}
\usepackage{hyperref}
\usepackage{cleveref}
\usepackage{mathtools}
\usepackage{amsfonts}
\usepackage[T1]{fontenc}
\usepackage{mathpazo}
\usepackage{bm}
\usepackage{isomath}
\usepackage{verbatim}
\usepackage{changepage}
\usepackage[usenames,dvipsnames,svgnames,table]{xcolor}

%custom theorem labelling

%for colored links
\definecolor{blueviolet}{RGB}{60,50,200}
\definecolor{oliveg}{RGB}{40,200,30}
\hypersetup{colorlinks=true,       % false: boxed links; true: colored links
    linkcolor=blueviolet,
    citecolor=oliveg,
}

\usepackage{tikz}
\usetikzlibrary{calc}
\usepackage[ruled,vlined,linesnumbered]{algorithm2e}
\usepackage{color}
\usepackage{todonotes}

%new counter for theorem

%defining new environment
\theoremstyle{definition}
\newtheorem{definition}{Definition}[section]

\theoremstyle{plain}
\newtheorem{lemma}{Lemma}[section]
\newtheorem{theorem}{Theorem}

\newtheorem{example}{Example}[section]
\newtheorem{question}{Question}[section]
\newtheorem{assumption}{Assumption}[section]
\newtheorem{remark}{Remark}[section]
%defining new commands
%\newcommand{\SUM}[2]{\sum_{#1}^{#2}}

\begin{comment}

%\newcommand{\C}{\mathcal{C}}

%\newcommand{\deg}{\textnormal{Deg}}

% Commands by Janaky

\end{comment}

%footnotes

\makeatletter
\newcommand\footnoteref[1]{\protected@xdef\@thefnmark{\ref{#1}}\@footnotemark}
\makeatother

%for algorithm removing colon
%\algsetup{linenodelimiter=.}

\title{Disentangling Mixtures of Unknown Causal Interventions\footnote{Published as an oral paper  \cite{Kumar2021} at the $37^{th}$ Conference on Uncertainty in Artificial Intelligence (UAI), 2021.}}

% \author{Abhinav Kumar\\
% \footnotesize{Paypal Hyderabad \\ Telangana, India}\\
% \normalsize{\tt abhinavkumar.sieger@gmail.com}

% \and {Gaurav Sinha\\
% \footnotesize{Adobe Research Bangalore\\ Karnataka, India\\}\\
% \normalsize{\tt gasinha@adobe.com}}}

\author{Abhinav Kumar\\
\normalsize{Microsoft Research, Bangalore}\\
\normalsize{Karnataka, India}\\
\normalsize{\tt abhinavkumar.wk@gmail.com}

\and {Gaurav Sinha}\\
\normalsize{Microsoft Research, Bangalore}\\
\normalsize{Karnataka, India}\\
\normalsize{\tt sinhagaur88@gmail.com}}

\date{}

\begin{document}
\maketitle

\begin{abstract}
In many real-world scenarios, such as gene knockout experiments, targeted interventions are often accompanied by unknown interventions at off-target sites. Moreover, different units can get randomly exposed to different unknown interventions, thereby creating a mixture of interventions. Identifying different components of this mixture can be very valuable in some applications. Motivated by such situations, in this work, we study the problem of identifying all components present in a mixture of interventions on a given causal Bayesian Network. We construct an example to show that, in general, the components are not identifiable from the mixture distribution. Next, assuming that the given network satisfies a \emph{positivity} condition, we show that, if the set of mixture components satisfy a mild \emph{exclusion} assumption, then they can be uniquely identified. Our proof gives an efficient algorithm to recover these targets from the exponentially large search space of possible targets. In the more realistic scenario, where distributions are given via finitely many samples, we conduct a simulation study to analyze the performance of an algorithm derived from our identifiability proof. 
\end{abstract}

\section{Introduction}
\label{sec:intro}

%% change this opening. used it elsewhere
\textbf{Motivation:} Causal Bayesian Networks (CBN) (\cite{Pearl2009, Spirtes2010}), have become the popular choice to model causal relationships in many real-world systems. These models can simulate the effects of external interventions that forcibly fix target system variables to desired target values. The simulation is done via the $do()$  operator (\cite{Pearl2009}) wherein the CBN is altered by breaking incoming edges of the target variables and fixing them to desired target values. Pre-estimating the effect of interventions can help in decision making, for example, interventions on a CBN describing gene interactions can guide gene editing experiments.

However, real-world interventions are not always precise and mistakenly end up intervening other unintended targets. For example, gene knockout experiments via the CRISPR-Cas$9$ gene-editing technology perform unintended cleavage at unknown genome sites (\cite{Fu2013, Wang2015}). Moreover, the unintended intervention targets can themselves be noisy i.e. different individuals targeted by the same intervention might undergo completely different off-target interventions.  For example, \cite{Aryal2018} demonstrated that same gene editing experiment (using CRISPR-Cas$9$) on mice embryos exhibited different unintended cleavage for different mice. 
% Even for same mouse different alleles of the gene showed different unintended cleavage.  
In such situations, units (samples) that underwent different unintended interventions are not segregated and therefore the generated distribution becomes a mixture of individual interventional distributions. We ask the following natural question.

%For example, \cite{Schaefer2017} demonstrated that the same gene editing experiment (using CRISPR-Cas$9$) on two mice exhibited\footnote{using the costly process of whole-genome-sequencing.} unintended cleavage at different genome sites.

% \cite{Yanfang2013} demonstrated that sgRNA designed to induce Cas9-mediated indels at a certain gene also introduced changes at several off-target sites\footnote{using additional costly experiments} with a similar rate as on the intended sites

\begin{question}
\label{ques:main}
Given access to a mixture of interventional distributions, under what conditions can one identify all the intervention targets?\footnote{from here on wards we use the terms targets and components interchangeably}
\end{question}
%\textit{{\bf Question.} }

\textbf{Our Contributions:}
First, we model the situation of identifying hidden off-target interventions as the problem of identifying individual components of a mixture of interventions. We assume an underlying CBN and model interventions via the $do()$ operator described above.  Second, by constructing examples, we show that, in general for a given CBN and an input mixture distribution, components of the mixture might not be unique. Using this, we motivate the need for a mild \emph{positivity} assumption (Assumption \ref{assm:positivity})  on the distribution generated by the CBN and a mild and reasonable \emph{exclusion} assumption (Assumption \ref{assm:intervention-exclusion}) on the structure of the intervention components present in the mixture. Third, we prove that, given access to a CBN satisfying \emph{positivity} and any input mixture having intervention components satisfying \emph{exclusion}, such intervention components generating the mixture can be uniquely identified from the mixture distribution. Fourth, given oracle access to marginals of the distributions generated by the CBN and the mixture, our identifiability proof gives an efficient algorithm to recover target components from an exponentially large space of possible components. Finally, in Section \ref{sec:exp}, we conduct a simulation study to analyze the performance of an algorithm (Algorithm \ref{algo:general-algo}) directly inspired from our identifiability proof, but with access to only finitely many samples. Even though the goal of our paper is to prove identifiability of these intervention targets, our simulation study indicates that our algorithm is promising in the realistic situation of finitely many samples.

\textbf{Related Prior Work:}
Recently \cite{Squires2020} considered the problem of causal discovery using unknown intervention targets, and, as a crucial intermediate step, prove identifiability of these targets. They also design two algorithms UT-IGSP and JCI-GSP (based on the Joint Causal Inference framework in \cite{Mooij2020}) to recover these targets from data. As discussed in our motivation, in many real situations, such as \cite{Aryal2018}, the off-target effects are themselves noisy and end up creating mixtures of multiple unknown interventions. Since \cite{Squires2020} assumes separate access to each unknown intervention, their algorithm cannot be used in our situation.
Another line of work related to ours is the study of mixtures of Bayesian Networks. Perfect interventions i.e. $do()$ operators on the CBNs create new interventional CBNs (Definition $1.3.1$ in \cite{Pearl2009}) and therefore the input mixture in our setup is actually a mixture of Bayesian Networks. This is a more general problem and was tackled first in \cite{Thiesson1998}. They developed an Expectation-Maximization (EM) based heuristic to find individual Bayesian Network components. However, they do not investigate identifiability of the components. In our setting, we care about identifiability since the components correspond to the unknown interventions. Along with recovering the individual components of a mixture, there is also growing interest in developing techniques to understand conditional independence (CI) relationships among the variables in the mixture data. For example, some recent works try to build other graphical representations, from which the CI relationships in the mixture can be easily understood (\cite{Spirtes1994, Ramsey2011, Strobl2019a, Strobl2019b, Saeed2020}). Even though these new representations can identify some aspects of the components, none of these works prove or discuss the uniqueness and identifiability of the components, which is the main interest of our work. Finally, we would like to mention that the general area of causal discovery and inference using different kinds of unknown interventions has received a lot of attention lately (\cite{Eaton2007, Squires2020, Jaber2020, Mooij2020, Rothenhausler2015}). Even though many of these do not align with goal of our paper, the growing interest in this area highlights seriousness of the issue of unintended stochasticity in targeted interventions and the desire to design algorithms robust to them.

\section{Preliminaries}
\label{sec:prelim}

\textbf{Notation} We use capital letters (e.g. $X$) to represent random variables and the corresponding lower case letter $x$ to denote the assignment $X=x$. The set of values taken by random variable $X$ will be denoted by $C_X$. Unless otherwise specified, all random variables in this paper are discrete and have finite support i.e. $|C_X|<\infty$. A tuple or set of random variables is denoted by capital bold face letter (e.g. $\bm{X}$) and the corresponding lower case bold faced letter $\bm{x}$ will denote the assignment $\bm{X} = \bm{x}$. Let, $C_{\bm{X}}=\prod_{X_i\in \bm{X}}C_{X_i}$ denote the set of all possible values that can be taken by $\bm{X}$. Probability of $\bm{X}$ taking the value $\bm{x}$ is denoted by $\mathbb{P}(\bm{X=x})$ or equivalently as $\mathbb{P}(\bm{x})$ and probability of $\bm{X=x}$ given $\bm{Y=y}$ is denoted as $\mathbb{P}(\bm{X=x|Y=y})$ or equivalently with $\mathbb{P}(\bm{x|y})$.  We will use $[n]$ to denote the set $\{1,2,...,n\}$, $[m,n]$ to denote set $\{m,m+1,\ldots,n\}$, calligraphic capital letters e.g. $\mathcal{S}$ to denote sets. Size of any set $\mathcal{S}$ is denoted by $|\mathcal{S}|$. $\mathbb{R}, \mathbb{R}_{+}$ and $\mathbb{R}_{\geq 0}$ will denote the set of real numbers, positive real numbers and non-negative real numbers respectively.

\textbf{Bayesian Network}
Let $\mathcal{G}=\{\bm{V},\mathcal{E}\}$ be a directed acyclic graph (DAG) with node set $\bm{V}=\{V_1,\ldots,V_n\}$ where each node $V_i$ represents a random variable. $\mathcal{G}$ is called a Bayesian Network if the following factorization of the joint probability of $\bm{V}$ holds.
\begin{equation*}
    \label{eq:graph-factor}
    \mathbb{P}(\bm{V}) = \prod_{V_i \in \bm{V}} \mathbb{P}(V_i|\bm{pa}(V_i))
\end{equation*}
where $\bm{pa}(V_i)$ are parent nodes of $V_i$.

%%%%%%%%%%%%%%%%%%%%%%%%%%%%%%%%%%%%%%%%%%%%%%%%%%%%%%%%%%%%%%%%%%%%%

A \emph{causal Bayesian Network} is a Bayesian Network where all edges denote direct causal relationships. It allows for modeling effect of external actions called ``interventions'', by appropriate modification of the Bayesian Network. A formal definition of causal Bayesian Networks can be found in Definition $1.3.1$, \cite{Pearl2009}.

\textbf{Interventions:} As mentioned above, these capture external actions on a system under consideration, for example, dosage of medicines administered to a patient, providing subsidies to poorer sections of the population, etc. A natural way to model them in causal Bayesian Networks is to perform the act of \emph{causal surgery}, wherein, incoming edges into the node(s) to be intervened are removed and the node(s) is forcibly fixed to the desired value.
As described in Definition $1.3.1$, \cite{Pearl2009}, the new network thus obtained is treated as the Bayesian Network modelling effect of the intervention. Formally, following the notation in \cite{Pearl2009}, if we perform intervention on nodes $\bm{X}\subseteq \bm{V}$ with a desire to set it to value $\bm{x}^{*}\in C_{\bm{X}}$, then the effect of this intervention (also known as \emph{interventional distribution}) is a probability distribution on $\bm{V}$ denoted as $\mathbb{P}(\bm{v}|do(\bm{x}^{*}))$ (or $\mathbb{P}_{\bm{x}^{*}}(\bm{v})$). In the intervened Bayesian Network, conditional probability distributions (CPD) $\mathbb{P}(X_i|\bm{pa}(X_i))$ of all $X_i\in \bm{X}$ that are intervened and set to $x_i^*$, changes to the Kronecker delta function $\delta_{x_i, x_i^*}$ i.e. $\mathbb{P}(X_i=x_i|\bm{pa}(X_i)) = 1$ if $x_i=x_i^*$ else it is $0$. The CPD of the non-intervened nodes i.e. $\bm{V}\setminus\bm{X}$ remains unchanged. Hence the interventional distribution factorizes as:
\begin{equation*}
\label{eq:interv-factor}
    \mathbb{P}_{\bm{x}^{*}}(V=\bm{v}) = \prod_{V_i \notin \bm{X}} \mathbb{P}(V=v_i|\bm{pa}(V_i)) \prod_{V_i \in \bm{X}} \delta_{v_i,x_i^{*}}
\end{equation*}

Such interventions are called \textit{perfect interventions}. They capture many real-world situations like \textit{gene-editing-experiments}, where a certain target gene is spliced out and replaced with the desired gene. Other kinds of interventions such as \emph{imperfect, uncertain} e.t.c. have been defined in literature (Section 2 in \cite{Eaton2007}). However, in this paper we only deal with perfect interventions.

%%%%%%%%%%%%%%%%%%%%%%%%%%%%%%%%%%%%%%%%%%%%%%%%%%%%%%%%%%%%%%%%%

% %%%%%%%%%%%%%%%%%%%%%%%%%%%%%%%%%%%%%%%%%%%%%%%%%%%%%%%%%%%%%%%%%%%%%

\section{Problem Formulation and Main Theorem}
\label{sec:problem-formulation}
% Motivated by Question \ref{ques:main}, in this section we formulate our problem and convert this rather informal question into a more well defined technical question. In order to do so, we define the following.
As motivated in Section \ref{sec:intro}, the intended interventions performed during an experiment often have hidden off-target effects, which could themselves be stochastic, leading to different hidden treatments on different individuals. We can model such a situation as an unknown mixture of different interventions. Here is a formal definition.

\begin{definition}[Mixture of Interventions]
\label{def:general-mix}
Let $\mathcal{G}=\{\bm{V},\mathcal{E}\}$ be a causal Bayesian Network. A probability distribution $\mathbb{P}_{mix}(\bm{V})$ is called a mixture of interventions if for some $m\in \mathbb{N}$, there exist subsets ${\bm T}_1, \ldots, {\bm T}_m \subseteq {\bm V}$, corresponding values $\bm{t}_i \in \mathcal{C}_{{\bm T}_i}$, and positive scalar weights $\pi_i \in \mathbb{R}_{+}$, $i\in [m]$, such that
\begin{equation*}
    \mathbb{P}_{mix}({\bm V}) = \sum\limits_{i=1}^m \pi_i \mathbb{P}_{{\bm t_i}}({\bm V})
\end{equation*}
where $\bm{t}_i\neq \bm{t}_j$ for all $i\neq j \in [m]$\footnote{if $t_i=t_j$, then $(\pi_i + \pi_j)\mathbb{P}_{{\bm t_i}}(\bm{V})$ is one component.}.
We allow $\bm{T_i} =\varnothing$, in which case, $\mathbb{P}_{{\bm t_i}}(\bm{V})$ is defined as $\mathbb{P}(\bm{V})$. Note that for $\mathbb{P}_{mix}$ to be a valid distribution $\sum_{i=1}^m \pi_i=1$.
We refer to the set $\mathcal{T} = \{(\bm{t}_i, \pi_i), i\in [m]\}$ as a set of \textit{intervention tuples} generating the mixture.

\end{definition}

% \subsection{Uniqueness and Identifiability}
% \label{subsec:problem-formulation}
\textbf{Uniqueness and Identifiability :} In our mixture model, each of the targets $\bm{t}_i$, corresponds to an intervention that intentionally or unintentionally transpired in the experiment. Since our ultimate goal is to recover them from the mixture distribution (see Question \ref{ques:main}), the problem only makes sense if they ``uniquely'' define the mixture. Formally, there should not exist two distinct sets of intervention tuples $\mathcal{T}_1=\{({\bm t}_1^1,\pi_1^1),\ldots, ({\bm t}_n^1,\pi_n^1) \}$ and  $\mathcal{T}_2=\{({\bm t}_1^2,\pi_1^2),\ldots, ({\bm t}_m^2,\pi_m^2) \}$ which generate the same mixture distribution, i.e.,
\begin{equation*}
    \mathbb{P}_{mix}({\bm V})=\sum\limits_{i=1}^{n}\pi_i^1 \mathbb{P}_{{\bm t}_i^1}({\bm V})=\sum\limits_{j=1}^{m}\pi_j^2 \mathbb{P}_{{\bm t}_j^2}({\bm V})
\end{equation*}

An immediate next question is that of ``identifiability''. 
Given access to a causal Bayesian Network and the joint distribution $\mathbb{P}(\bm{V})$ it captures, does there exist an algorithm, that takes as input the mixture distribution $\mathbb{P}_{mix}(\bm{V})$ and exactly recovers the unknown set of intervention tuples  that generated $\mathbb{P}_{mix}(\bm{V})$?
In the general case, the answer to both these questions is no! Using a very simple network, with just one node, we show that mixture distributions need not be unique, motivating the need for more assumptions. More complicated examples with multiple nodes can be easily created in the same way, but, for a cleaner presentation we stick to this example since its purpose is to only motivate an assumption we make next.

\begin{example} 
\label{eg:exclusion}
Consider a causal Bayesian Network with a single binary variable $\bm{V}=\{V_1\}$, i.e. $C_{V_1} = \{0,1\}$ and denote $\mathbb{P}(V_1=0), \mathbb{P}(V_1=1)$ by $p_0, p_1$ respectively. Define the mixture,
\begin{equation*}
    \mathbb{P}_{mix}(V_1) =  \pi_0 \mathbb{P}_{0}(V_1) + \pi_1 \mathbb{P}_1(V_1) + (1-\pi_0-\pi_1) \mathbb{P}(V_1)
\end{equation*}
On setting $V_1=0$ and then $V_1=1$ in the above equation, and rearranging the terms, we obtain
\begin{equation*}
\label{eq:example-exclusion-matrix}
    \begin{bmatrix}
        \begin{array}{cc}
             1-p_0 & -p_0 \\
            p_0-1 & p_0 \\
        \end{array}
    \end{bmatrix}
    \begin{bmatrix*}[l]
        \pi_0 \\ \pi_1
    \end{bmatrix*}
        =
    \begin{bmatrix*}[l]
        \begin{array}{cc}
             & \mathbb{P}_{mix}(V_1=0)-p_0 \\
             & \mathbb{P}_{mix}(V_1=1)-p_1 \\
        \end{array}
    \end{bmatrix*}
\end{equation*}

The above $2\times 2$ matrix is singular and has rank $1$ i.e. the system does not have a unique solution. In fact, when $0<p_0<1$, 
\[
\pi_0=\frac{\mathbb{P}_{mix}(V_1=0)-p_0 + p_0t}{1-p_0}, \hspace{1em} \pi_1=t
\]
are all valid solutions whenever $t \leq 1-\mathbb{P}_{mix}(V_1=0)$ and $t\geq \max\{\frac{p_0-\mathbb{P}_{mix}(V_1=0)}{p_0}, 0\}$. Therefore, uniqueness of intervention tuples does not hold in general.
\end{example}

Even though the example looks very simple, it captures the main reason behind the non-identifiability of the set of intervention tuples. Exactly like the above example, for any mixture, we can obtain systems of linear equations by evaluating marginal probabilities of $\mathbb{P}_{mix}$ for different settings of $\bm{V}$. Our goal then would be to find settings which help us solve these systems uniquely and recover the set of intervention tuples. Unfortunately, in this process, similar to the above example, the linear systems will have dependent equations and therefore infinitely many solutions. To get over this issue, we focus our attention on sets of intervention tuples, where, for each variable there exists some value that is missing from all of it's intervention targets. In, our main theorem, we show that any mixture generated by such a set cannot be generated by any other set of this kind. Next, we formally state the assumption and then discuss why it is extremely mild and reasonable in most real situations.

\begin{assumption}[Exclusion]
\label{assm:intervention-exclusion}
Let $\mathcal{T}$ be a set of intervention tuples as defined in Definition \ref{def:general-mix}. We say that $\mathcal{T}$ satisfies \emph{exclusion}, if for all $V_i\in \bm{V}$, there exists $\Bar{v}_i\in C_{V_i}$ such that $\Bar{v}_i\notin \bm{t}$ for any target $\bm{t}$ belonging to any tuple in $\mathcal{T}$. We say that a mixture of interventions $\mathbb{P}_{mix}(\bm{V})$ satisfies \emph{exclusion} if some set of intervention tuples $\mathcal{T}$ generating it satisfies exclusion.
\end{assumption}

\begin{remark}
This assumption puts only a mild constraint on the set of mixtures we consider. For example, in a network with $n$ nodes and each node having $\leq k$ possible values, excluding a fixed value of each node, can still generate arbitrary mixtures over $\Omega(k^n)$ allowed targets. Without exclusion, there are $O((k+1)^n)$ possible targets that generate the mixtures. Therefore the reduction is minimal compared to the size of the space of targets we are searching in. In real-world applications, it's common for nodes to have a large number of possible values. Therefore, for each node, the possibility of off-target interventions impacting all values becomes unlikely. We also emphasize that the values missing from the targets can be different for different input mixtures and are not known to our algorithms. Our identifiability algorithm only uses existence of such missing values making its interpretation even more general.
\end{remark}

Even though the above assumption helps us tackle the singularity problem outlined in Example \ref{eg:exclusion}, it is not enough to guarantee uniqueness of intervention tuples in general. We also assume a simple ``positivity'' assumption on the causal Bayesian Network, which demands that the joint probability $\mathbb{P}(\bm{v})>0$ for any setting $\bm{V = v}$. In fact, using the same example as above (Example \ref{eg:exclusion}), we show that not assuming $p_0, p_1 > 0$, can lead to multiple set of intervention tuples satisfying Assumption \ref{assm:intervention-exclusion} and generating the same mixture. To see this, we consider the input mixture $\mathbb{P}_{mix}(\bm{V}) = \mathbb{P}(\bm{V})$. The set of intervention tuples $\mathcal{T}_1 = \{(\varnothing, 1)\}$ for it clearly satisfies Assumption \ref{assm:intervention-exclusion} as intervention targets $(V_1=a)$ and $(V_1=b)$ are excluded. Now, if $p_1=0$, then $\mathbb{P}_0(V_1) = \mathbb{P}(V_1)$ and for any $\pi_0\in [0,1]$, we can trivially write
\[
    \mathbb{P}_{mix}(V_1) =  \pi_0 \mathbb{P}_{0}(V_1) + (1-\pi_0) \mathbb{P}(V_1)
\]
implying that $\mathcal{T}_2 = \{(V_1=0, \pi_0), (\varnothing, 1-\pi_0)\}$ is another set of intervention tuples for $\mathbb{P}_{mix}$, implying non-uniqueness. Here is the statement of our assumption.

\begin{assumption}[Positivity]
\label{assm:positivity}
Let $\bm{V}$ be the set of nodes in our causal Bayesian Network and $\mathbb{P}(\bm{V})$ be the corresponding joint probability distribution. We assume that $\mathbb{P}(\bm{v})>0$ for all $\bm{v}\in C_{\bm{V}}$.
\end{assumption}

\begin{remark}
As a straight forward consequence of this assumption, for every random variable $V_i\in \bm{V}$, we can show that the conditional probability distributions are positive as well i.e. $\mathbb{P}(v_i|\bm{pa}(v_i))>0$ for all $v_i\in C_{V_i}$ and setting $\bm{pa}(v_i)$ of the parents.
This positivity assumption is commonly assumed in many works related to causal graphs. For example, \cite{Hauser2012} assume positivity throughout their discussion when characterizing the Interventional Markov Equivalence class.
\end{remark}

Having stated these assumptions, we are now ready to state the main theorem of this paper. A detailed proof is provided in Section \ref{section:proof-main-theorem}.

\begin{theorem} 
\label{theorem:general-identify}
Let $\mathcal{G}=\{\bm{V},\mathcal{E}\}$ be a causal Bayesian Network and $\mathbb{P}(\bm{V})$ be the associated joint probability distribution satisfying Assumption \ref{assm:positivity}. Let $\mathbb{P}_{mix}(\bm{V})$ (Definition \ref{def:general-mix}) be any mixture of interventions that satisfies Assumption \ref{assm:intervention-exclusion}. The following are true.
\begin{enumerate}
    \item There exists a unique set of intervention tuples $\mathcal{T} = \{(\bm{t_1}, \pi_1), \ldots, (\bm{t_m}, \pi_m)\}$ satisfying Assumption \ref{assm:intervention-exclusion}, such that
    \[
    \mathbb{P}_{mix}(\bm{V}) = \sum\limits_{i=1}^m \pi_i \mathbb{P}_{\bm{t_i}}(V).
    \]
    \item Given access to $\mathcal{G}$,  $\mathbb{P}(\bm{V})$ and $\mathbb{P}_{mix}(\bm{V})$, there exists an algorithm, that runs in time $n*(m*k_{max})^{O(1)}$, and, outputs the set of intervention tuples $\mathcal{T}$ (satisfying Assumption \ref{assm:intervention-exclusion}) generating it. Here $n$ is the number of nodes in $\mathcal{G}$, $m$ is the size of set $\mathcal{T}$ and $k_{max}$ is the maximum number of distinct values that any node can take.
\end{enumerate}
\end{theorem}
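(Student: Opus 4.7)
The plan is to establish both parts of the theorem simultaneously by an explicit inversion procedure driven by the exclusion assumption. The starting point is the identity, immediate from the interventional factorization in Section \ref{sec:prelim}, that for any target $\bm{t}_i$ on $\bm{T}_i\subseteq\bm{V}$ and any complete assignment $\bm{v}\in C_{\bm{V}}$,
\[
\mathbb{P}_{\bm{t}_i}(\bm{v}) \;=\; \mathbf{1}\!\left[\bm{v}\text{ consistent with }\bm{t}_i\right]\cdot \frac{\mathbb{P}(\bm{v})}{\prod_{V_j\in \bm{T}_i} \mathbb{P}(v_j\mid \bm{pa}(v_j))}.
\]
By Assumption \ref{assm:positivity} every conditional in the denominator is strictly positive, so $\mathbb{P}_{\bm{t}_i}(\bm{v})>0$ exactly when $\bm{v}$ is consistent with $\bm{t}_i$. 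Fix an excluded-value sequence $\bar{v}_1,\ldots,\bar{v}_n$ witnessing Assumption \ref{assm:intervention-exclusion} for $\mathcal{T}$, and, for each partial assignment $\bm{t}$ on some $\bm{T}\subseteq\bm{V}$ with every coordinate different from the corresponding $\bar{v}_j$, let $\bm{v}(\bm{t})$ denote its extension that assigns $\bar{v}_j$ on every $V_j\notin\bm{T}$. The first key step is an isolation lemma: a target $\bm{t}^*\in\mathcal{T}$ on $\bm{T}^*$ is consistent with $\bm{v}(\bm{t})$ if and only if $\bm{T}^*\subseteq\bm{T}$ and $\bm{t}^*$ is the restriction of $\bm{t}$ to $\bm{T}^*$. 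The nontrivial direction uses exclusion: any $V_j\in\bm{T}^*\setminus\bm{T}$ would force $t^*_j=\bar{v}_j$, contradicting the choice of $\bar{v}_j$.

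Given the isolation lemma, recovery of the weights proceeds by induction on $|\bm{T}|$. The base case $\bm{t}=\varnothing$ gives $\mathbb{P}_{mix}(\bar{\bm{v}})=\pi_\varnothing \mathbb{P}(\bar{\bm{v}})$ (with $\pi_\varnothing=0$ if the empty target is absent from $\mathcal{T}$), so $\pi_\varnothing$ is determined. For each larger $\bm{t}$, once the weights of all strict sub-assignments of $\bm{t}$ have been fixed, the identity
\[
\mathbb{P}_{mix}\!\left(\bm{v}(\bm{t})\right)\;=\;\sum_{\bm{t}^*\text{ sub-assignment of }\bm{t}} \pi_{\bm{t}^*}\,\mathbb{P}_{\bm{t}^*}\!\left(\bm{v}(\bm{t})\right)
\]
determines $\pi_{\bm{t}}$ by subtracting the known lower-order contributions and dividing by $\mathbb{P}_{\bm{t}}(\bm{v}(\bm{t}))$, which is strictly positive by the preliminary identity. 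Partial assignments absent from $\mathcal{T}$ simply surface with weight zero, so the procedure recovers $\mathcal{T}$ exactly. To achieve the runtime claimed in part~(2), I will phrase this induction as a breadth-first exploration of the poset of discovered positive-weight targets starting from $\varnothing$: each discovered $\bm{t}$ spawns at most $n\cdot k_{max}$ children of the form $\bm{t}\cup\{V_j=v\}$ with $V_j\notin\bm{T}$ and $v\ne\bar{v}_j$, and each candidate weight is computed in constant arithmetic work from the formula above; the $\bar{v}_j$'s themselves are identified by a one-time variable-wise search.

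The subtlest point, and the main obstacle, is uniqueness (part~(1)). Two candidate sets $\mathcal{T}_1,\mathcal{T}_2$ generating the same mixture each come with their own witnessing excluded-value sequence, and these need not agree, so the isolation construction applies cleanly only to one side at a time. My plan is to pick a minimum-size target $\bm{t}$ in the symmetric difference $\mathcal{T}_1\triangle\mathcal{T}_2$, apply the isolation construction using, say, side~1's excluded values, and evaluate both representations of $\mathbb{P}_{mix}(\bm{v}(\bm{t}))$. Side~1's representation is controlled directly by the isolation lemma, while side~2's representation may include extra targets consistent with $\bm{v}(\bm{t})$ that are not sub-assignments of $\bm{t}$; bounding or eliminating these cross-contributions, likely through an additional minimality argument paired with a careful use of positivity, is the part I expect to require the most care.
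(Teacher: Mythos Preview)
Your isolation lemma and the resulting M\"obius-style inversion are correct and give a clean recovery procedure \emph{once an excluded-value sequence $(\bar v_1,\dots,\bar v_n)$ is fixed}. This is a genuinely different route from the paper's, which inducts along a topological order and peels off one variable at a time. However, two steps in your plan do not go through as stated.

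\textbf{Runtime (Part 2).} The formula for $\pi_{\bm t}$ subtracts the contributions of \emph{all} strict sub-assignments of $\bm t$, not just the immediate predecessors; for a target of size $s$ that is $2^s-1$ quantities that must themselves have been computed. Your BFS restricted to positive-weight targets then fails outright: if $\mathcal T=\{(V_1{=}0,\dots,V_n{=}0)\}$ with $m=1$, every proper sub-assignment has weight $0$, so the search never leaves $\varnothing$ and the single target is never reached. If instead you expand every node regardless of weight, you traverse $\prod_j|C_{V_j}|$ partial assignments, which is exponential in $n$. Either way you do not get $n\cdot(m\,k_{\max})^{O(1)}$. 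The paper obtains this bound precisely because of its node-by-node organization: after marginalizing $V_{N+1}$ one has at most $m$ targets on $\bm V_N$ by induction, and each is lifted by solving a single $k\times k$ linear system, so the total work is $n$ levels times $(m\,k_{\max})^{O(1)}$ per level. The ``one-time variable-wise search'' for the $\bar v_j$ is also unspecified; the paper discovers them incrementally as a by-product of the same recursion.

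\textbf{Uniqueness (Part 1).} You correctly flag the obstruction, but the sketch does not close. Taking $\bm t$ of minimum size with $\pi^{(1)}_{\bm t}\neq\pi^{(2)}_{\bm t}$ and evaluating both sides at $\bm v^{(1)}(\bm t)$ gives, after cancelling the common strict-sub-assignment terms,
\[
\bigl(\pi^{(1)}_{\bm t}-\pi^{(2)}_{\bm t}\bigr)\,\mathbb P_{\bm t}\!\bigl(\bm v^{(1)}(\bm t)\bigr)\;=\;\sum_{\substack{\bm t'\in\mathcal T_2\\ \bm t'\text{ uses some }\bar v^{(1)}_j,\; j\notin\bm T}}\pi^{(2)}_{\bm t'}\,\mathbb P_{\bm t'}\!\bigl(\bm v^{(1)}(\bm t)\bigr).
\]
Each such $\bm t'$ lies in $\mathcal T_2\setminus\mathcal T_1$, so minimality forces $|\bm t'|\ge|\bm t|$, but nothing more; there is no contradiction, and iterating the argument with side~2's excluded values produces a symmetric situation without termination. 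The paper avoids this entirely via a linear-algebra lemma: the $k\times k$ system $(cI-\bm a\,\mathbf 1^{\top})\bm x=\bm b$ with $a_l>0$ and $\sum_l a_l=c$ has a one-dimensional kernel spanned by a strictly positive vector, so among nonnegative solutions with at least one zero coordinate there is exactly one. This pins down the lifted coefficients for $V_{N+1}$ \emph{without knowing which value is excluded}, so two candidate decompositions are forced to agree at every inductive step regardless of their witnessing sequences. That lemma, together with the variable-by-variable induction that sets it up, is the missing ingredient.
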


\begin{remark}
Though Assumption \ref{assm:positivity} is a sufficient conditions for Theorem \ref{theorem:general-identify}, it is not necessary. In Example \ref{eg:positivity-counterexample}, we give an example that does not satisfy this assumption but is uniquely generated by a set of intervention tuples satisfying Assumption \ref{assm:intervention-exclusion}.
\end{remark}

\section{Proof of Main Theorem}
\label{section:proof-main-theorem}

In this section, we provide rigorous proof to both parts of Theorem \ref{theorem:general-identify} together. Our uniqueness proof (for Part $1$) is constructive and gives an algorithm as described in Part $2$. Our proof goes via an induction argument on the number of nodes $n$ present in the given Bayesian Network. There are many lemmas stated throughout the proof. For a cleaner exposition, all of their proofs are provided in Appendix \ref{appendix:lemma-sys-eqn}.

\subsection{Base Case ($n=1$)}
\label{subsection:base-case}
Consider a causal Bayesian Network $\mathcal{G} = (V, \mathcal{E})$ with only one vertex $V$ and no edges (i.e. $\mathcal{E}=\varnothing$), such that $\mathbb{P}(V)$ satisfies Assumption \ref{assm:positivity}. Let $C_V = \{v^1, \ldots,v^k\}$ be the set of values that $V$ can take. Therefore, by Assumption \ref{assm:positivity}, $\mathbb{P}(v^i)>0$ for all $i\in [k]$. Next, consider any mixture of interventions $\mathbb{P}_{mix}(V)$ that satisfies Assumption \ref{assm:intervention-exclusion}. Writing the most general form of $\mathbb{P}_{mix}$, i.e. allowing for scalar weights to be $\geq 0$, we can write,
\begin{equation*}
   \mathbb{P}_{mix}(V) = \pi_{0}\mathbb{P}_{t_0}(V) + \pi_{1}\mathbb{P}_{t_1}(V) + \ldots + \pi_{k}\mathbb{P}_{t_k}(V), 
\end{equation*}

where $t_0 = \varnothing, t_1 = v^1, \ldots ,t_k = v^k$. By the notation in Definition \ref{def:general-mix},  $\mathbb{P}_{\varnothing}(V) = \mathbb{P}(V)$. Subtracting $\mathbb{P}(V)$ from both sides and setting $\pi_0 = 1-\sum_{i=1}^k \pi_i$, we get,
\begin{equation*}
    \mathbb{P}_{mix}(V) - \mathbb{P}(V) = \sum\limits_{i=1}^k \pi_{i}(\mathbb{P}_{v^i}(V) - \mathbb{P}(V)).
\end{equation*}

Recall, from the definition of interventions in Section \ref{sec:prelim}, for any $v^j \in C_V$, $\mathbb{P}_{v^i}(v^j) = \delta_{v^i, v^j}$. Substituting $V=v^1, \ldots, v^k$ and using $\mathbb{P}_{v^i}(v^j) = \delta_{v^i, v^j}$, gives us $k$ linear equations which can be written in the following matrix form:
\begin{equation}
\label{eq:t-basecase-sys}
\left.
\left.
    \begin{bmatrix}
        1 - a_{1} & - a_{1} & . &. & - a_{1}\\
        - a_{2} & 1 - a_{2} & . & .& - a_{2}\\
        . & . & . &. &. \\
        - a_{k} & - a_{k} & . & . & 1 - a_{k}\\
    \end{bmatrix}
\right.
    \begin{bmatrix}
        \pi_1\\
        \pi_2\\
        .\\
        \pi_k\\
    \end{bmatrix}
\right.=
    \begin{bmatrix}
    b_1\\
    b_2\\
    .\\
    b_k\\
    \end{bmatrix}
\end{equation}
where $b_i=\mathbb{P}_{mix}(v^i)-\mathbb{P}(v^i)$ and $a_i=\mathbb{P}(v^i)>0$ (Assumption \ref{assm:positivity}).
Any set of intervention tuples $\mathcal{T}$ generating $\mathbb{P}_{mix}(V)$ can be obtained as a solution to the above system. Since, in Part $1$ of the theorem we restrict our focus to $\mathcal{T}$ that satisfy Assumption \ref{assm:intervention-exclusion}, we know there exists some $i\in [k]$, such that $\pi_i=0$. In the following lemma, we show that such a system under these assumptions has a unique solution when $\pi_1, \ldots,\pi_k \in \mathbb{R}_{\geq 0}$. Proof of this lemma is presented in Appendix \ref{proof:lemma-sys-eqn}.

\begin{lemma}
\label{lemma:sys-eqn}
Consider the following linear system.
\begin{equation*}
\label{eq:l-base-sys}
\left.
\left.
    \begin{bmatrix}
        c - a_{1} & - a_{1} & . &. & - a_{1}\\
        - a_{2} & c - a_{2} & . & .& - a_{2}\\
        . & . & . &. &. \\
        - a_{k} & - a_{k} & . & . & c - a_{k}\\
    \end{bmatrix}
\right.
    \begin{bmatrix}
        x_1\\
        x_2\\
        .\\
        x_k\\
    \end{bmatrix}
\right.=
    \begin{bmatrix}
    b_1\\
    b_2\\
    .\\
    b_k\\
    \end{bmatrix}
\end{equation*}
Assume that $a_1,\ldots,a_k>0$, $\sum_{j=1}^{k}a_j=c$ and it has at least one solution. Then, rank of the above matrix is $k-1$ and there are infinitely many solutions. Under the assumption that $\bm{x}\in \mathbb{R}_{\geq 0}$ and $x_i=0$ for some $i\in[k]$, the solution becomes unique. Given access to $a_j$s, $b_j$s and $c$, there exists an algorithm that computes this solution in $k^{O(1)}$ time.
\end{lemma}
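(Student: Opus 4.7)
The plan is to recognize the matrix $M = cI - a\mathbf{1}^T$, where $a = (a_1,\ldots,a_k)^T$ and $\mathbf{1}$ is the all-ones column vector, and to pin down its null space first. For any $v\in \mathbb{R}^k$ one has $Mv = cv - (\mathbf{1}^T v)\,a$, so $Mv=0$ forces $v$ to be a scalar multiple of $a$; conversely, $Ma = ca - (\sum_j a_j)\,a = ca - ca = 0$, using the hypothesis $\sum_j a_j = c$. Hence $\ker(M) = \mathrm{span}(a)$ is one-dimensional, which immediately gives $\mathrm{rank}(M) = k-1$, and, given any particular solution $x^0$, the full solution set is the affine line $\{x^0 + t\,a : t\in\mathbb{R}\}$, settling the rank and infinitely-many-solutions claims.

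For uniqueness under the extra constraints, I would exploit the strict positivity of every coordinate of $a$. Suppose $x^{(1)} = x^0 + t_1 a$ and $x^{(2)} = x^0 + t_2 a$ both lie in $\mathbb{R}_{\geq 0}^k$ and both have some zero coordinate, with $t_1 < t_2$ without loss of generality. Then $x^{(2)} - x^{(1)} = (t_2 - t_1)\,a$ is strictly positive in every entry, so $x^{(1)} < x^{(2)}$ coordinatewise. Picking an index $i$ with $x^{(2)}_i = 0$ now yields $x^{(1)}_i < 0$, contradicting $x^{(1)}\geq 0$. Hence $t_1 = t_2$, and the constrained solution is unique.

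For the algorithmic part, I would first compute any particular solution $x^0$ by Gaussian elimination on the given system in $O(k^3)$ time, which succeeds by the consistency hypothesis. I would then set $t^* = \max_{j\in[k]} (-x^0_j / a_j)$ (well-defined since $a_j > 0$) and output $x = x^0 + t^*\,a$. By construction $x_j = x^0_j + t^* a_j \geq 0$ for every $j$, with equality at any index attaining the maximum, so $x$ satisfies both constraints and, by the previous paragraph, is the unique such solution. The total running time is $k^{O(1)}$, as required.

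The only conceptually non-routine step is the observation that the identity $\sum_j a_j = c$ places $a$ itself in the null space of $M$, which pins down the full kernel in one line and drives both the rank computation and the uniqueness argument. Beyond this, the strict positivity of $a$ turns the monotonicity in $t$ into a clean coordinatewise ordering of solutions, so I do not anticipate any further obstacle.
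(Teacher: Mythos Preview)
Your proof is correct and shares the same core insight as the paper---namely, that the null space is spanned by the strictly positive vector $a$---but your execution is cleaner on every count. The paper establishes the rank via explicit row operations $R_i \mapsto R_i - (a_i/a_k)R_k$ and then checks that the last row is a combination of the others; you instead read off $M = cI - a\mathbf{1}^T$ and compute $Mv = cv - (\mathbf{1}^Tv)a$ in one line, which immediately gives both $Ma=0$ and $\dim\ker M = 1$. For uniqueness the paper does a small case analysis (same zero coordinate versus different zero coordinates) to reach a sign contradiction, whereas your monotonicity argument---that $t_1<t_2$ forces $x^{(1)}<x^{(2)}$ coordinatewise---handles both cases at once. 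Finally, the paper's algorithm tries each of the $k$ choices of $x_i=0$ and checks which one yields a nonnegative solution; your closed-form $t^* = \max_j(-x^0_j/a_j)$ finds the right one directly. Both buy the same conclusions at the same asymptotic cost, but your route is more structural and avoids the bookkeeping of row reduction and case enumeration.
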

 It's easy to see that Equation \ref{eq:t-basecase-sys} satisfies all requirements of Lemma \ref{lemma:sys-eqn}, implying the base case of our induction proof. 

\textbf{Inductive hypothesis $(n=N)$:} Assume, Theorem \ref{theorem:general-identify} is true for all causal Bayesian Networks on $N$ nodes, that satisfy Assumption \ref{assm:positivity} and input mixtures that satisfy Assumption \ref{assm:intervention-exclusion}.

\subsection{Induction step $(n=N+1)$:}
\label{subsection:induction-step}
Assuming the above inductive hypothesis, we show that Theorem \ref{theorem:general-identify} is true for all causal Bayesian Networks on $N+1$ nodes, and mixture of interventions on it, satisfying Assumptions \ref{assm:positivity} and  \ref{assm:intervention-exclusion} respectively. Let $\bm{V} = \{V_1,\ldots, V_{N+1}\}$, $\mathbb{P}(\bm{V})$ be the distribution of $\bm{V}$ and $\mathbb{P}_{mix}(\bm{V})$ be any mixture of interventions that satisfies Assumption \ref{assm:intervention-exclusion}.
We wish to show that there is a unique set of intervention tuples satisfying Assumption \ref{assm:intervention-exclusion} that generates $\mathbb{P}_{mix}(\bm{V})$. Without loss of generality let $V_1\prec \ldots \prec V_{N+1}$ be a topological order for $\mathcal{G}$. We will now marginalize on $V_{N+1}$ to reduce our problem to the $n=N$ case, so that we can use the inductive hypothesis. The following lemma is required to make this argument. We present it's proof in Appendix \ref{proof:lemma-marginalization}.
\begin{lemma}
\label{lemma:marginalization}
Let $\bm{V}_N = \{V_1,\ldots,V_N\}$,
\begin{enumerate}
    \item $\mathbb{P}(\bm{V}_N)$ is generated by the CBN $\mathcal{G}_N$ obtained by  deleting vertex $V_{N+1}$ (and all its incoming edges) from $\mathcal{G}$,
    and satisfies Assumption \ref{assm:positivity}.
    \item $\mathbb{P}_{mix}(\bm{V}_N)$ can be written as a mixture of interventions on $\mathcal{G}_N$ that satisfies Assumption \ref{assm:intervention-exclusion}.
    \item Given access to $\mathbb{P}(\bm{V})$ and  $\mathbb{P}_{mix}(\bm{V})$, in $O(k_{max})$ time we can create access to $\mathbb{P}(\bm{V}_N)$ and $\mathbb{P}_{mix}(\bm{V}_N)$, by marginalizing on $V_{N+1}$.
\end{enumerate}
\end{lemma}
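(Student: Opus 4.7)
The plan is to prove the three parts of Lemma \ref{lemma:marginalization} in order, leveraging the fact that $V_{N+1}$ is a topological sink of $\mathcal{G}$: since $V_1 \prec \ldots \prec V_{N+1}$ is a topological order, no $V_i$ with $i \leq N$ can have $V_{N+1}$ as a parent, so the edge set of $\mathcal{G}_N$ is exactly the edge set of $\mathcal{G}$ with all edges incoming to $V_{N+1}$ removed. This single observation powers everything that follows, since it means every factor in the Bayesian network or interventional factorization indexed by $i \leq N$ is independent of $v_{N+1}$.

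For Part 1, I would start from the Bayesian network factorization $\mathbb{P}(\bm{V}) = \prod_{i=1}^{N+1} \mathbb{P}(V_i \mid \bm{pa}(V_i))$ and marginalize over $V_{N+1}$. The factors with $i \leq N$ pull out of the sum because none of them depend on $v_{N+1}$, and the remaining sum $\sum_{v_{N+1}} \mathbb{P}(V_{N+1} = v_{N+1} \mid \bm{pa}(V_{N+1})) = 1$. This yields $\mathbb{P}(\bm{V}_N) = \prod_{i=1}^{N} \mathbb{P}(V_i \mid \bm{pa}(V_i))$, which is precisely the factorization according to $\mathcal{G}_N$. Positivity of $\mathbb{P}(\bm{V}_N)$ is then immediate from $\mathbb{P}(\bm{v}_N) = \sum_{v_{N+1}} \mathbb{P}(\bm{v}_N, v_{N+1}) > 0$, since each summand is positive by Assumption \ref{assm:positivity} applied to $\mathcal{G}$.

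For Part 2, expand $\mathbb{P}_{mix}(\bm{V}) = \sum_{i=1}^m \pi_i \mathbb{P}_{\bm{t}_i}(\bm{V})$ using a set of intervention tuples $\mathcal{T}$ satisfying exclusion, and split each target $\bm{t}_i$ into its restriction $\bm{t}_i^N$ to coordinates in $\bm{V}_N$ and (possibly) an assignment to $V_{N+1}$. The interventional factorization has, as its $V_{N+1}$-factor, either $\mathbb{P}(V_{N+1} \mid \bm{pa}(V_{N+1}))$ (when $V_{N+1}$ is not intervened on) or the Kronecker delta $\delta_{v_{N+1}, t_{i,N+1}}$ (when it is); in both cases the sum over $v_{N+1}$ equals $1$. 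Therefore $\sum_{v_{N+1}} \mathbb{P}_{\bm{t}_i}(\bm{V}) = \mathbb{P}_{\bm{t}_i^N}(\bm{V}_N)$, viewed as an interventional distribution on $\mathcal{G}_N$, so $\mathbb{P}_{mix}(\bm{V}_N) = \sum_{i=1}^m \pi_i \mathbb{P}_{\bm{t}_i^N}(\bm{V}_N)$. Collapsing tuples whose restrictions coincide (by summing their positive weights) yields a set of intervention tuples with pairwise distinct targets, as required by Definition \ref{def:general-mix}, and exclusion transfers automatically because for each $V_i \in \bm{V}_N$ the value $\bar{v}_i$ missing from all targets in $\mathcal{T}$ is still missing from every restricted target.

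Part 3 is purely computational: given oracle access to $\mathbb{P}(\bm{V})$ and $\mathbb{P}_{mix}(\bm{V})$, each query $\mathbb{P}(\bm{v}_N)$ or $\mathbb{P}_{mix}(\bm{v}_N)$ requires summing $|C_{V_{N+1}}| \leq k_{\max}$ terms, giving the claimed $O(k_{\max})$ bound. There is no deep obstacle here; the only point that warrants care is the bookkeeping in Part 2 when two distinct original targets project to the same restriction on $\bm{V}_N$ (which happens precisely when they agree on $\bm{V}_N$ but differ on $V_{N+1}$). These must be merged so that the resulting mixture representation on $\mathcal{G}_N$ has distinct targets; without this step the object produced would not meet the definition of a mixture of interventions, and the inductive hypothesis could not be cleanly invoked in the next stage of the main proof.
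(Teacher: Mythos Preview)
Your proposal is correct and follows essentially the same approach as the paper: marginalize the factorization over the sink node $V_{N+1}$ for Part~1, marginalize each interventional component (case-splitting on whether $V_{N+1}$ is intervened) for Part~2, and sum over $|C_{V_{N+1}}|\le k_{\max}$ values for Part~3. Your write-up is in fact slightly more complete than the paper's, since you explicitly verify positivity of $\mathbb{P}(\bm{V}_N)$, explicitly argue that exclusion transfers to the restricted targets, and call out the need to merge tuples with coinciding restrictions---points the paper's proof leaves implicit.
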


Using the inductive hypothesis with this claim, we get that there exists a unique set of intervention tuples $\mathcal{S} = \{(\bm{s}_1, \mu_1), \ldots, (\bm{s}_q, \mu_q)\}$\footnote{For $i\in [q]$, $\bm{s}_i$ are values taken by variables $\bm{S}_i\subset \bm{V}$} satisfying Assumption \ref{assm:intervention-exclusion} that generates $\mathbb{P}_{mix}(\bm{V}_N)$, i.e.,
\begin{equation*}
\label{eq:t-marg-raw}
\mathbb{P}_{mix}(\bm{V}_N) = \sum\limits_{j=1}^q\mu_j \mathbb{P}_{\bm{s}_j}(\bm{V}_N),
\end{equation*}
The induction hypothesis also implies that $\mathcal{S}$ can be computed in $N*(q*k_{max})^{O(1)}$ time using access to $\mathbb{P}(\bm{V}_N)$ and $\mathbb{P}_{mix}(\bm{V}_N)$.
The next step in our proof then is to show that, for a given $\mathcal{G}$, $\mathbb{P}(\bm{V})$ and $\mathbb{P}_{mix}(\bm{V})$, the set of intervention tuples $\mathcal{S}$ can be uniquely \emph{lifted} to a set $\mathcal{T}$ of intervention tuples that satisfies Assumption \ref{assm:intervention-exclusion} and generates $\mathbb{P}_{mix}(\bm{V})$. We also show that using access to $\mathcal{G}$, $\mathbb{P}(\bm{V})$ and $\mathbb{P}_{mix}(\bm{V})$, the lifting process runs in $(m*k_{max})^{O(1)}$ time implying that $\mathcal{T}$ can be computed in $(N+1)*(m*k_{max})^{O(1)}$ time.

\subsubsection{Lifting $\mathcal{S}$}
\label{subsubsection:lifting}
In this section we lift the set of intervention tuples  $\mathcal{S}$ generating $\mathbb{P}_{mix}(\bm{V}_N)$ uniquely to a set of intervention tuples satisfying Assumption \ref{assm:intervention-exclusion} generating $\mathbb{P}_{mix}(\bm{V})$. Let $\mathcal{T} = \{(\bm{t}_1, \pi_1), \ldots, (\bm{t}_m, \pi_m)\}$ be any arbitrary set of intervention tuples satisfying Assumption \ref{assm:intervention-exclusion} that generates $\mathbb{P}_{mix}(\bm{V})$, i.e.

\begin{equation}
\label{equation:input-mixture}
\mathbb{P}_{mix}(\bm{V}) = \sum\limits_{i=1}^m\pi_i \mathbb{P}_{\bm{t}_i}(\bm{V}).
\end{equation} 

First, we give a lemma that connects targets $\bm{t}_1, \ldots \bm{t}_m$ inside $\mathcal{T}$ with targets $\bm{s}_1,\ldots \bm{s}_q$ inside $\mathcal{S}$. We present it's proof in Appendix \ref{proof:lemma-lifting}.

\begin{lemma}
\label{lemma:lifting}
For every $\bm{t}_i, i\in [m]$, there is some $\bm{s}_j, j\in [q]$ such that, either $\bm{t}_i = \bm{s}_j$ or $\bm{t}_i = \bm{s}_j\cup \{v\}$ for some $v$ in $C_{V_{N+1}}$.
\end{lemma}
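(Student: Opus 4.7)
The plan is to reduce the claim to the uniqueness part of the inductive hypothesis applied on $\mathcal{G}_N$, by marginalizing the mixture on the right-hand side of Equation \ref{equation:input-mixture} over $V_{N+1}$. The key observation is that $V_{N+1}$ is a sink in $\mathcal{G}$ (last in the topological order), so for any target $\bm{t}_i$, marginalizing $\mathbb{P}_{\bm{t}_i}(\bm{V})$ over $V_{N+1}$ produces an interventional distribution on $\mathcal{G}_N$ whose target is simply $\bm{t}_i$ with the coordinate for $V_{N+1}$ (if any) dropped. Let me denote this restriction by $\bm{t}_i' := \bm{t}_i \setminus \{V_{N+1}\}$.

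To verify this, I would split on whether $V_{N+1} \in \bm{t}_i$. If $V_{N+1} \notin \bm{t}_i$, the factorization of $\mathbb{P}_{\bm{t}_i}(\bm{V})$ contains the CPD $\mathbb{P}(V_{N+1} \mid \bm{pa}(V_{N+1}))$, and since $V_{N+1}$ appears in no other factor, summing over $V_{N+1}$ yields $1$ and leaves exactly the factorization of $\mathbb{P}_{\bm{t}_i}(\bm{V}_N)$ on $\mathcal{G}_N$. If $V_{N+1}$ is set to some value $v$ in $\bm{t}_i$, the corresponding Kronecker factor $\delta_{V_{N+1},v}$ again sums to $1$, leaving $\mathbb{P}_{\bm{t}_i \setminus \{V_{N+1}\}}(\bm{V}_N)$. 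Summing Equation \ref{equation:input-mixture} over $V_{N+1}$ therefore gives
\begin{equation*}
\mathbb{P}_{mix}(\bm{V}_N) \;=\; \sum_{i=1}^m \pi_i\, \mathbb{P}_{\bm{t}_i'}(\bm{V}_N).
\end{equation*}

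Several $\bm{t}_i'$ may coincide (e.g.\ $\bm{t}_i = \bm{t}_j \cup \{V_{N+1}=v\}$ or two tuples differing only in the value assigned to $V_{N+1}$), so I would group tuples with equal restriction and sum their weights to obtain a set of intervention tuples $\mathcal{T}' = \{(\bm{s}_\ell', \mu_\ell')\}$ on $\mathcal{G}_N$ with distinct targets that generates $\mathbb{P}_{mix}(\bm{V}_N)$. This grouped set still satisfies Assumption \ref{assm:intervention-exclusion}: any $\bar{v}_i$ missing from every $\bm{t}_i$ for $V_i \in \bm{V}_N$ is automatically missing from every restriction $\bm{t}_i'$. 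By the uniqueness part of the inductive hypothesis applied to $\mathcal{G}_N$, $\mathbb{P}(\bm{V}_N)$ (which satisfies Assumption \ref{assm:positivity} by Lemma \ref{lemma:marginalization}), and $\mathbb{P}_{mix}(\bm{V}_N)$, we must have $\mathcal{T}' = \mathcal{S}$.

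Consequently, for every $i \in [m]$ the restriction $\bm{t}_i'$ equals some $\bm{s}_j$ in $\mathcal{S}$, which is precisely the conclusion: either $\bm{t}_i$ already omits $V_{N+1}$, in which case $\bm{t}_i = \bm{s}_j$, or $\bm{t}_i$ fixes $V_{N+1}$ to some value $v \in C_{V_{N+1}}$, in which case $\bm{t}_i = \bm{s}_j \cup \{v\}$. The only subtle point I anticipate is making precise that marginalization commutes with the $do()$-surgery when the marginalized variable is a sink; once that is cleanly stated, the rest is just bookkeeping plus an invocation of the inductive uniqueness.
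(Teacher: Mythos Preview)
Your proposal is correct and follows essentially the same approach as the paper: the paper's proof is the one-liner ``This follows by the marginalization equation (Equation~\ref{equation:marginal-identity})'', and you have simply spelled out what that entails---marginalizing each $\mathbb{P}_{\bm{t}_i}(\bm{V})$ over the sink $V_{N+1}$, grouping equal restrictions, checking Assumption~\ref{assm:intervention-exclusion} is preserved, and invoking the uniqueness half of the inductive hypothesis on $\mathcal{G}_N$ to identify the grouped set with $\mathcal{S}$.
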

For $j\in [q]$, we define sets $\mathcal{S}_j = \{\bm{s}_j, \bm{s}_j\cup\{v^1\}, \ldots \bm{s}_j\cup\{v^k\}\}$ where $C_{V_{N+1}} = \{v^1, \ldots, v^k\}$. Since the targets $\bm{s}_j, j\in [q]$ are distinct, the sets $\mathcal{S}_j, j\in [q]$ are disjoint. Lemma \ref{lemma:lifting} implies that 
\[
\{\bm{t}_1, \ldots,\bm{t}_m\} \subset \mathcal{S}_1\cup\ldots\mathcal{S}_q
\]
Since $\mathcal{T}$ was arbitrary, for every such $\mathcal{T}$, there exist non-negative scalars $\pi_{\bm{s}}$, $\bm{s}\in \mathcal{S}_1\cup\ldots\cup\mathcal{S}_q$ such that Equation \ref{equation:input-mixture} can be written as,
\begin{equation}
    \label{equation:lifted-mixture}
    \mathbb{P}_{mix}(\bm{V}) = \sum\limits_{j=1}^q\sum\limits_{\bm{s}\in \mathcal{S}_j} \pi_{\bm{s}}\mathbb{P}_{\bm{s}}(\bm{V})
\end{equation}
Any solution of Equation \ref{equation:lifted-mixture} with $\pi_{\bm{s}}\geq 0$ gives a set of intervention tuples for $\mathbb{P}_{mix}$. We show that there is a unique such set which satisfies Assumption \ref{assm:intervention-exclusion}.

\begin{lemma}
Let $\pi_{\bm{s}}$, $\bm{s}\in \mathcal{S}_1\cup\ldots\cup\mathcal{S}_q$ be some non-negative solution to Equation \ref{equation:lifted-mixture} and the set $\mathcal{T} = \{(\bm{s}, \pi_{\bm{s}}) : \pi_{\bm{s}}>0\}$ be the corresponding set of intervention tuples. There exists a unique $\mathcal{T}$ that satisfies Assumption \ref{assm:intervention-exclusion}.
\end{lemma}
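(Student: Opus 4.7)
The plan is to reduce uniqueness for $\mathbb{P}_{mix}(\bm{V})$ on $\mathcal{G}$ to uniqueness for two auxiliary mixtures on $\bm{V}_N$ and then apply the inductive hypothesis a second time. Fix a topological order so that $V_{N+1}$ is a sink of $\mathcal{G}$. Then intervening on $V_{N+1}$ leaves the distribution of $\bm{V}_N$ unchanged, while leaving $V_{N+1}$ un-intervened contributes the conditional factor $\alpha_i(\bm{v}_N) := \mathbb{P}(V_{N+1} = v^i \mid \bm{pa}(V_{N+1})\text{ at }\bm{v}_N)$. Substituting these two shapes into Equation \ref{equation:lifted-mixture} at the point $(\bm{V}_N = \bm{v}_N,\, V_{N+1} = v^i)$ yields the decomposition
\[
\mathbb{P}_{mix}(\bm{v}_N, v^i) \;=\; \alpha_i(\bm{v}_N)\, X(\bm{v}_N) \,+\, Y_i(\bm{v}_N),
\]
where $X(\bm{v}_N) := \sum_j \pi_{\bm{s}_j}\, \mathbb{P}_{\bm{s}_j}(\bm{v}_N)$ and $Y_i(\bm{v}_N) := \sum_j \pi_{\bm{s}_j \cup \{v^i\}}\, \mathbb{P}_{\bm{s}_j}(\bm{v}_N)$. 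The remark following Assumption \ref{assm:positivity} guarantees every $\alpha_i(\bm{v}_N)$ is strictly positive, and this will be crucial.

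Given two candidate non-negative solutions $\{\pi^{(1)}_{\bm{s}}\}, \{\pi^{(2)}_{\bm{s}}\}$ producing tuple-sets $\mathcal{T}_1, \mathcal{T}_2$ that both satisfy Assumption \ref{assm:intervention-exclusion} with missing values $v^{l_1}, v^{l_2}$ at $V_{N+1}$, I have $Y_{l_1}^{(1)} \equiv 0$ and $Y_{l_2}^{(2)} \equiv 0$. Subtracting the two instances of the display above yields $\alpha_i(\bm{v}_N)\,[X^{(1)} - X^{(2)}] = Y_i^{(2)} - Y_i^{(1)}$ for every $\bm{v}_N$ and $i$. Plugging in $i = l_1$ forces $X^{(1)} \geq X^{(2)}$ pointwise, and plugging in $i = l_2$ gives the reverse inequality -- even when $l_1 \neq l_2$ -- so $X^{(1)} \equiv X^{(2)}$ and consequently $Y_i^{(1)} \equiv Y_i^{(2)}$ for every $i$.

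To then pin down the individual weights, observe that $X$ and each $Y_i$ are non-negative linear combinations of the interventional distributions $\mathbb{P}_{\bm{s}_j}(\bm{V}_N)$ on the smaller network $\mathcal{G}_N$ of Lemma \ref{lemma:marginalization}. Setting $c := \sum_j \pi^{(1)}_{\bm{s}_j} = \sum_{\bm{v}_N} X^{(1)}(\bm{v}_N) = \sum_j \pi^{(2)}_{\bm{s}_j}$, when $c > 0$ the normalization $X^{(1)}/c = X^{(2)}/c$ is a bona fide mixture of interventions on $\mathcal{G}_N$ with tuple-sets $\{(\bm{s}_j, \pi^{(r)}_{\bm{s}_j}/c) : \pi^{(r)}_{\bm{s}_j} > 0\}$ for $r \in \{1,2\}$. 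These inherit Assumption \ref{assm:intervention-exclusion} because exclusion is preserved when passing to a subset of targets, so the inductive hypothesis forces $\pi^{(1)}_{\bm{s}_j} = \pi^{(2)}_{\bm{s}_j}$ for every $j$ (the case $c = 0$ is immediate). Applying the identical normalization trick to each $Y_i$ pins down every $\pi_{\bm{s}_j \cup \{v^i\}}$, giving the desired uniqueness of $\mathcal{T}$.

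The chief obstacle is the sign-chasing step that equates $X^{(1)}$ and $X^{(2)}$ without first knowing whether $v^{l_1}$ and $v^{l_2}$ coincide; strict positivity of $\alpha_i$ is precisely what makes both inequalities tight, and clarifies why Assumption \ref{assm:positivity} cannot be dropped here. A smaller bookkeeping check is that the active target sets used to invoke the inductive hypothesis really satisfy Assumption \ref{assm:intervention-exclusion}, which holds because exclusion passes to subsets of targets.
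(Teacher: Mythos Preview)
Your argument is correct and takes a genuinely different route from the paper. The paper proceeds constructively: it orders the $\bm{s}_j$ by inclusion, designs specific evaluation points $\bm{v}_{i,l}=\bm{s}_i\cup\bm{s}_{-i}\cup\{v^l\}$ so that, by Lemma~\ref{lemma:setting}, contributions from $\mathcal{S}_{i+1}\cup\ldots\cup\mathcal{S}_q$ vanish, and then for each $i$ reduces to a $k\times k$ linear system that is solved uniquely via Lemma~\ref{lemma:sys-eqn} under the non-negativity and one-coordinate-zero constraints. This sequential elimination is what also delivers the polynomial-time algorithm in Part~2 of Theorem~\ref{theorem:general-identify}. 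Your proof instead separates the mixture pointwise into the ``$V_{N+1}$ un-intervened'' part $X$ and the ``$V_{N+1}=v^i$'' parts $Y_i$, uses a neat sign argument (exploiting $\alpha_i>0$ and $Y_{l_r}^{(r)}\equiv 0$) to force $X^{(1)}=X^{(2)}$ and hence $Y_i^{(1)}=Y_i^{(2)}$, and then invokes the inductive hypothesis a second time on the normalized $X$ and each normalized $Y_i$ as mixtures on $\mathcal{G}_N$. The trade-off: your route is shorter and more conceptual for pure uniqueness, but it is non-constructive --- it does not directly produce the weights or the running-time bound the paper needs for Part~2, whereas the paper's triangular system does both at once. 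Your bookkeeping checks (that the normalizing constants agree for the two solutions, and that exclusion passes to subsets of $\{\bm{s}_1,\ldots,\bm{s}_q\}$) are correct and are exactly what is needed to make the second appeal to the inductive hypothesis legitimate.
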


\begin{proof}
 We show that enforcing Assumption \ref{assm:intervention-exclusion} uniquely determines all $\pi_{\bm{s}}$ as solutions to a sequence of system of linear equations, implying that there is a unique $\mathcal{T}$  that satisfies Assumption \ref{assm:intervention-exclusion}. To construct this sequence, we need an ordering on $\bm{s}_1, \ldots, \bm{s}_q$. So, without loss of generality, we assume that for $j_1\leq j_2$, $\bm{s}_{j_2} \not\subseteq \bm{s}_{j_1}$. The linear equations are created by using specific settings for $\bm{V}$ in Equation \ref{equation:lifted-mixture} which enable us to decompose the linear system into a sequence of simpler systems i.e. one for each $\mathcal{S}_i$. We propose these settings next and explain why and how they work. Since $\mathcal{S}$ satisfies Assumption \ref{assm:intervention-exclusion}, there exists $\Bar{v}_i\in V_i, i\in [N]$ such that for all $j\in [q]$, $\Bar{v}_i\notin \bm{s}_j$. For $j\in [q]$, we define $\bm{s}_{-j} = \{\Bar{v}_i: V_i\notin \bm{S}_j\}$\footnote{$\bm{s}_j$ corresponds to set of variables $\bm{S}_j\subset \bm{V}$.} and for every $l\in [k]$, create settings
\[
\bm{v}_{j,l} = \bm{s}_j\cup \bm{s}_{-j}\cup\{v^l\}
\]
 where $C_{V_{N+1}} = \{v^1, \ldots, v^k\}$. The following lemma is used to decompose the system of equations into simpler systems. Proof is presented in Appendix \ref{proof:lemma-setting}.
\begin{lemma}
\label{lemma:setting}
For $i\in [q]$, $l\in [k]$ and $\bm{s} \in \mathcal{S}_{i+1}\cup\ldots\cup\mathcal{S}_q$,
\[
\mathbb{P}_{\bm{s}}(\bm{v}_{i,l}) = 0
\]
\end{lemma}
Using this in Equation \ref{equation:lifted-mixture}, leaves us with the following simpler system for every $i\in [q]$,
\begin{equation}
\label{equation:reduced-si}
   \mathbb{P}_{mix}(\bm{v}_{i,l}) - \sum\limits_{j=1}^{i-1}\sum\limits_{\bm{s}\in \mathcal{S}_j} \pi_{\bm{s}}\mathbb{P}_{\bm{s}}(\bm{v}_{i,l})  = \sum\limits_{\bm{s}\in \mathcal{S}_i} \pi_{\bm{s}}\mathbb{P}_{\bm{s}}(\bm{v}_{i,l}) 
\end{equation}

Suppose all $\pi_{\bm{s}}$, $\bm{s}\in \mathcal{S}_1\cup\ldots\cup\mathcal{S}_{i-1}$, have been determined. Then the left had side of this equation is completely known and has no unknown variables. We denote it by $\Delta$ going forward. Therefore, by varying $l\in [k]$, we have $k$ equations in $k+1$ variables $\pi_{\bm{s}}, \bm{s}\in \mathcal{S}_i$. In the next lemma, we will obtain a linear equation satisfied by these $k+1$ variables and reduce the system to $k$ equations in $k$ variables. On marginalizing over $V_{N+1}$ in Equation \ref{equation:lifted-mixture}, we get 
\begin{lemma}
\label{lemma:scalar-relationship}
For all $i\in [q]$ and $(\bm{s}_i,\mu_i)\in \mathcal{S}$, the following holds.
\[
    \mu_i = \sum\limits_{\bm{s}\in \mathcal{S}_i}\pi_{\bm{s}}
\]
\end{lemma}
Proof of Lemma \ref{lemma:scalar-relationship} is presented in Appendix \ref{proof:lemma-scalar-relationship}.
By making the substitution from this lemma above into Equation \ref{equation:reduced-si}, we get the equation
\begin{equation}
\label{equation:further-reduced-si}
   \Delta - \mu_i \mathbb{P}_{\bm{s}_i}(\bm{v}_{i,l})  = \sum\limits_{l^\prime\in [k]} \pi_{\bm{s}_i\cup\{v^{l^\prime}\}}(\mathbb{P}_{\bm{s}_i\cup\{v^{l^\prime}\}}(\bm{v}_{i,l}) - \mathbb{P}_{\bm{s}_i}(\bm{v}_{i,l})) 
\end{equation}
that gives a system of $k$ equations in $k$ variables when we vary $l^\prime\in [k]$. Clearly we are looking for non-negative solutions for $\pi_{\bm{s}_i\cup\{v^{l^\prime}\}}$, $l^\prime\in [k]$. When we enforce Assumption \ref{assm:intervention-exclusion}, there is some $l^\prime\in [k]$ such that $\pi_{\bm{s}_i\cup\{v^{l^\prime}\}}=0$. In Lemma \ref{lemma:si-choice}, we show that we can uniquely solve Equation \ref{equation:further-reduced-si} for such $\pi_{\bm{s}_i\cup\{v^{l^\prime}\}}$, $l^\prime\in [k]$.
\begin{lemma}
\label{lemma:si-choice}
For every $i\in [q]$, Equation \ref{equation:further-reduced-si} has a unique solution 
when we enforce that $\pi_{\bm{s}_i\cup\{v^{l^\prime}\}}$, $l^\prime\in [k]$ are non-negative and at least one of them is $0$.
\end{lemma}
We present a proof of this Lemma in Appendix \ref{proof:lemma-si-choice}.
This lemma implies that under enforcement of Assumption \ref{assm:intervention-exclusion}, all targets $\in \mathcal{S}_i$ (and their respective mixing coefficients) that appear in $\mathbb{P}_{mix}(\bm{V})$  get uniquely identified. Using this technique from $i=1$ to $q$, any set of intervention tuples satisfying Assumption \ref{assm:intervention-exclusion} that generates $\mathbb{P}_{mix}(\bm{V})$ gets uniquely identified. Therefore, there is a unique set of intervention tuples $\mathcal{T}$ that generates $\mathbb{P}_{mix}$ and satisfies Assumption \ref{assm:intervention-exclusion}.
\end{proof}
The lifting of targets in $\mathcal{S}_i$ is done in Lemma \ref{lemma:si-choice} using technique from  Lemma \ref{lemma:sys-eqn} which takes $(k_{max})^{O(1)}$ time. This is repeated for all $i\in [q]$, therefore, we spend $(q*k_{max})^{O(1)}$ time. It's easy to see that $q \leq m$ and so using the induction hypothesis the set of intervention tuples is computed in  $(N+1)*(m*k_{max})^{O(1)}$ time, completing the induction step. We describe our complete algorithm in Algorithm \ref{algo:general-algo-inf}. It's correctness and time complexity follows from the discussion in this section. For better understanding, in Examples \ref{eg:workout-example} and \ref{eg:workout-example-2}, we provide two worked out examples on small problem instances, that illustrate important aspects of our algorithm.

\begin{algorithm}[t]
\caption{DISENTANGLE-INFINITE}
\label{algo:general-algo-inf}
\SetAlgoLined
\DontPrintSemicolon
\SetKwInOut{Input}{input}
\SetKwInOut{Output}{output}
\SetKw{Return}{return}

\Input{Variables $\bm{V} = (V_1, \ldots, V_{N+1})$, CBN $\mathcal{G}$, Distributions $\mathbb{P}(\bm{V}), \mathbb{P}_{mix}(\bm{V})$}
\Output{Set of intervention tuples $\mathcal{T}$}
\BlankLine

\begin{enumerate}
    \item When $|\bm{V}|=1$, setup the linear system in Equation \ref{eq:t-basecase-sys} and solve it using technique described in Lemma \ref{lemma:sys-eqn} to obtain a set $\mathcal{T}$ of intervention tuples. \Return $\mathcal{T}$. 
    
    \item Let $V_1\prec\ldots\prec V_{N+1}$ denote a topological order in $\mathcal{G}$. Marginalize on $V_{N+1}$ to create access to $\mathbb{P}_{mix}(\bm{V}_N)$ and $\mathbb{P}(\bm{V}_N)$ where $\bm{V}_N = (V_1, \ldots,V_N)$. Construct $\mathcal{G}_N = \mathcal{G}\setminus \{V_{N+1}\}$. Recursively call this algorithm with inputs $\mathcal{G}_N$, $\mathbb{P}(\bm{V}_N)$, $\mathbb{P}_{mix}(\bm{V}_N)$, to compute the unique set of intervention tuples $\mathcal{S} = \{(\bm{s}_1, \mu_1), \ldots, (\bm{s}_q, \mu_q)\}$ that satisfies Assumption \ref{assm:intervention-exclusion} and generates $\mathbb{P}_{mix}(\bm{V}_N)$. Let $\bm{s}_1, \ldots, \bm{s}_q$ be ordered such that $i\leq j$ implies that $\bm{s}_j \not\subseteq \bm{s}_i$.
    For each $i\in [N]$, by inspecting $\bm{s}_j, j\in [q]$, identify $\Bar{v}_i\in C_{V_i}$ such that $\Bar{v}_i\notin \bm{s}_j$ for any $j\in [q]$.
    Define $\bm{s}_{-j} = \{\Bar{v}_i : V_i\notin S_j\}$. Let $C_{V_{N+1}} = \{v^1, \ldots, v^k\}$. For each $i\in[q]$ and $l\in [k]$, create setting $\bm{v}_{i,l} = \bm{s}_i\cup \bm{s}_{-i}\cup\{v^l\}$. 
    
    \item For each fixed $i\in [q]$, evaluate distributions for different $\bm{v}_{i,l}$, $l\in [k]$, to setup the system of equations described in Equation \ref{equation:further-reduced-si}. Solve the system using the technique outlined in proof of Lemma \ref{lemma:si-choice} (which in turn uses Lemma \ref{lemma:sys-eqn}). At the end of this process collect all the intervention tuples thus obtained (for all $i\in [q]$), in the set $\mathcal{T}$. \Return $\mathcal{T}$.
\end{enumerate}
\end{algorithm}

\section{Simulation Study}
\label{sec:exp}

The main purpose of this simulation study is to experimentally analyze the performance of Algorithm \ref{algo:general-algo} which modifies Algorithm \ref{algo:general-algo-inf} to make it work with finite number of samples  from distributions $\mathbb{P}_{mix}(\bm{V})$ and $\mathbb{P}(\bm{V})$.

\textbf{Simulation Setup:} For each simulation setting $(N,M)$\footnote{$N$ is number of nodes, $M$ is number of samples} we randomly sample a directed acyclic graph on $N$ nodes (each having $3$ categorical values), from the Scale-Free (SF) model (\cite{ScaleFree}), with number of edges chosen uniformly randomly from the set of integers $[N,5N]$. Given this graph, we model the conditional probability distribution of each node as a multinoulli distribution with Dirichlet priors having fixed parameter $\alpha=2$ for all categories. This is done to conform with Assumption \ref{assm:positivity}. This generates our causal Bayesian Network $\mathcal{G}$. We estimate marginal probabilities of the joint distribution defined by $\mathcal{G}$ by generating $M$ samples using ancestral sampling on the network. Now, to create input instances, we first choose an integer $m$ uniformly randomly from the set $[4, 16]$ and use this as the number of intervention tuples in the mixture. Then we iterate from $1$ to $m$ to build each intervention target of the mixture. First, we choose the size of the target by picking an integer $r$ uniformly randomly from the set $\{0,\ldots,N\}$. Then we uniformly randomly choose an $r-$sized subset of $[N]$, defining the variables in the current target. For each of these variables, we first choose a category uniformly randomly and remove it from consideration (in order to satisfy Assumption \ref{assm:intervention-exclusion}). From the remaining categories, we uniformly randomly select one for each of the variables in the target. Finally, we generate $m$ scalar weights for the mixing coefficients such that they sum to $1$. In order to make sure that these coefficients are not too small, we generate them with Dirichlet priors with all parameter values fixed to $2$. The settings for number of nodes $N$ and  sample size $M$ used in the experiment  are $(N,M) \in \{4,8,12\} \times \{2^4,2^5,\ldots,2^{20}\}$ where $\times$ is the direct product of sets.

%%%%%%%%%%%%%%%%%%%%%%%%%%%%%%%%%%%%%%%%%%%%%%%%%%%%%%%%%%%%%%%%%%%%%%%%%

\begin{figure}
  \begin{subfigure}[t]{.45\textwidth}
    \centering
    \includegraphics[width=\linewidth]{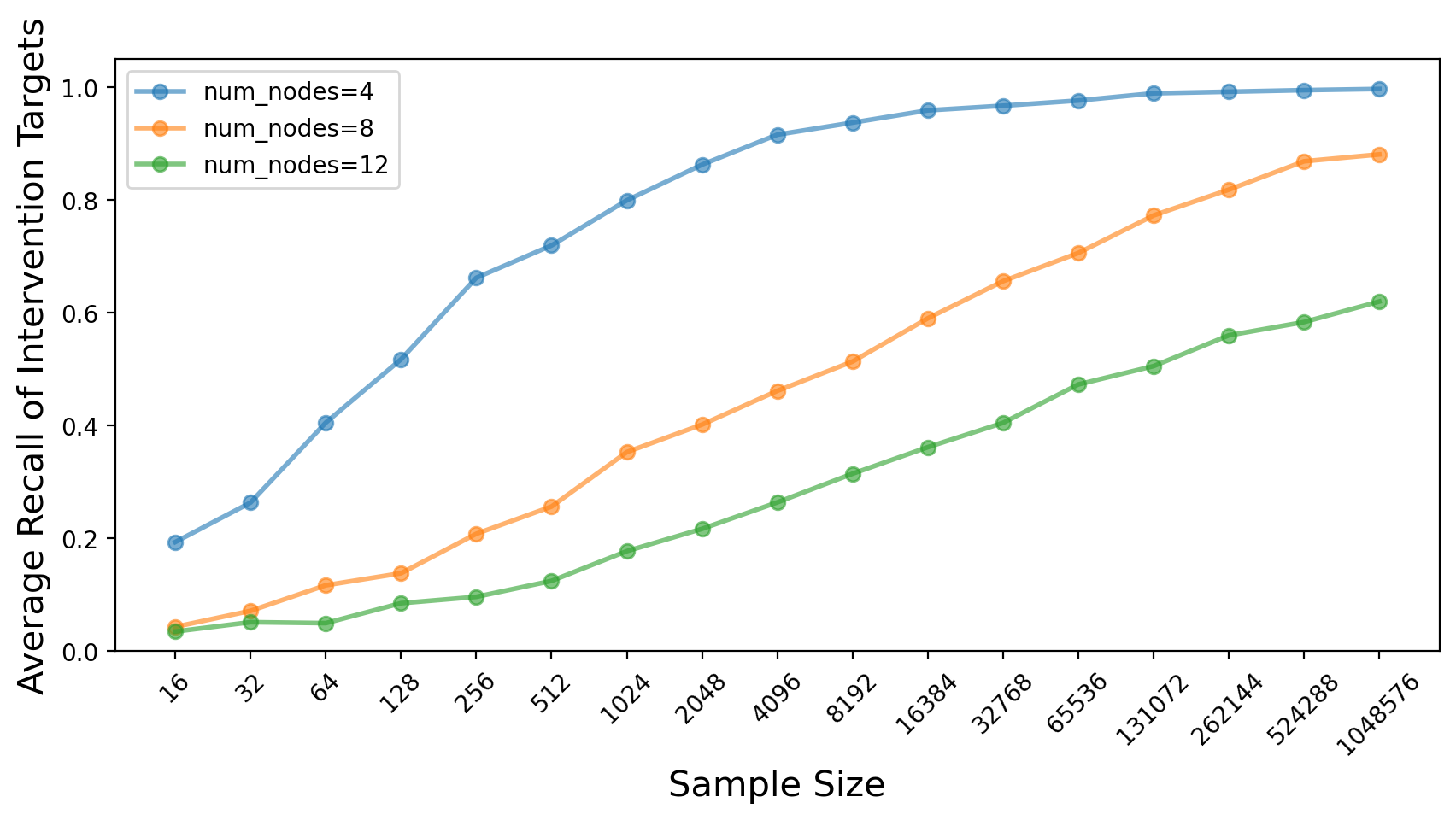}
		\caption[]%
        {{\small Recall}}    
		\label{fig:recall}
  \end{subfigure}
  \hfill
  \begin{subfigure}[t]{.45\textwidth}
    \centering
    \includegraphics[width=\linewidth]{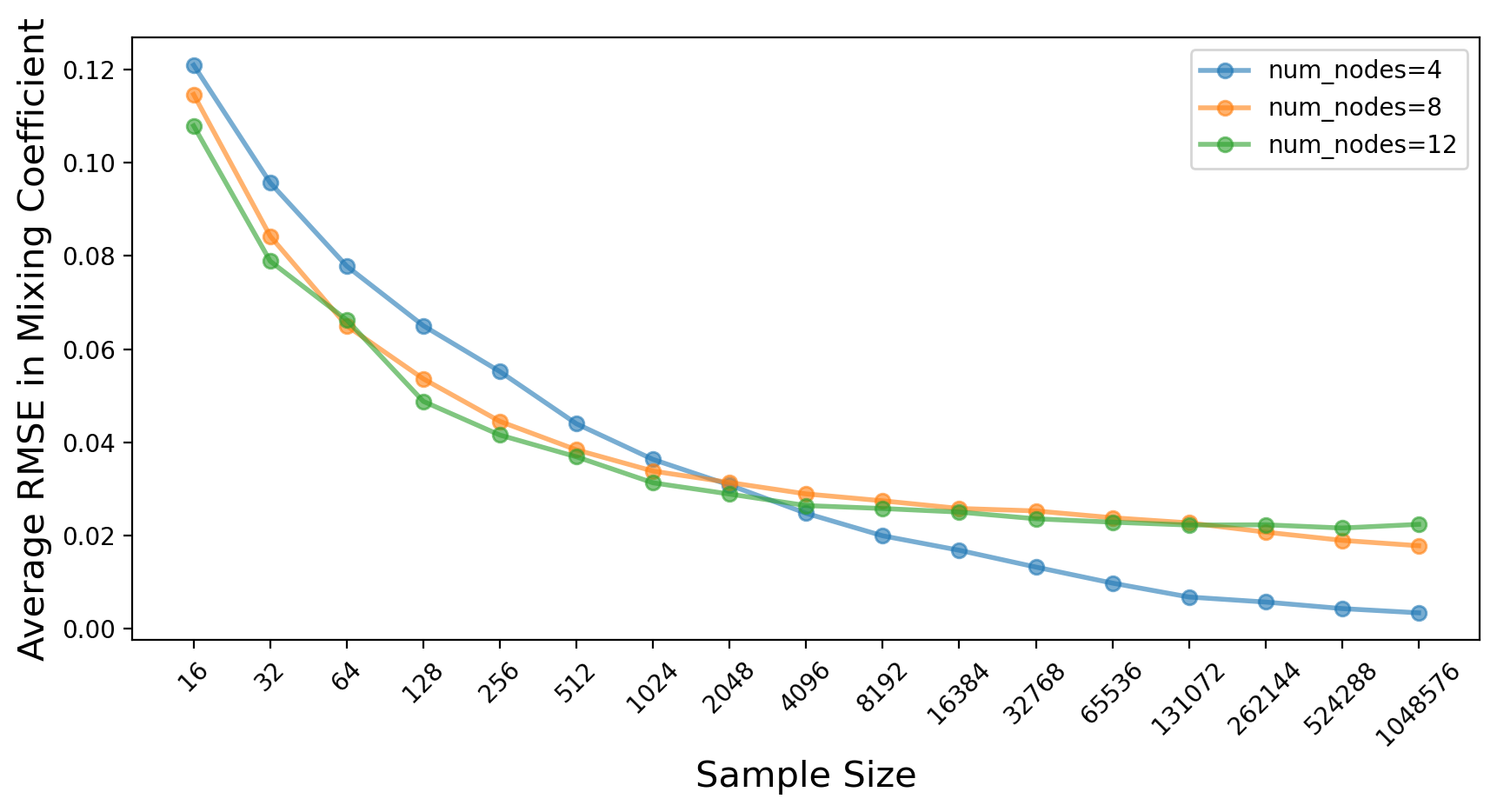}
		\caption[]%
        {{\small RMSE}}    
		\label{fig:rmse} 
  \end{subfigure}

  \medskip

  \begin{subfigure}[t]{.45\textwidth}
    \centering
    \includegraphics[width=\linewidth]{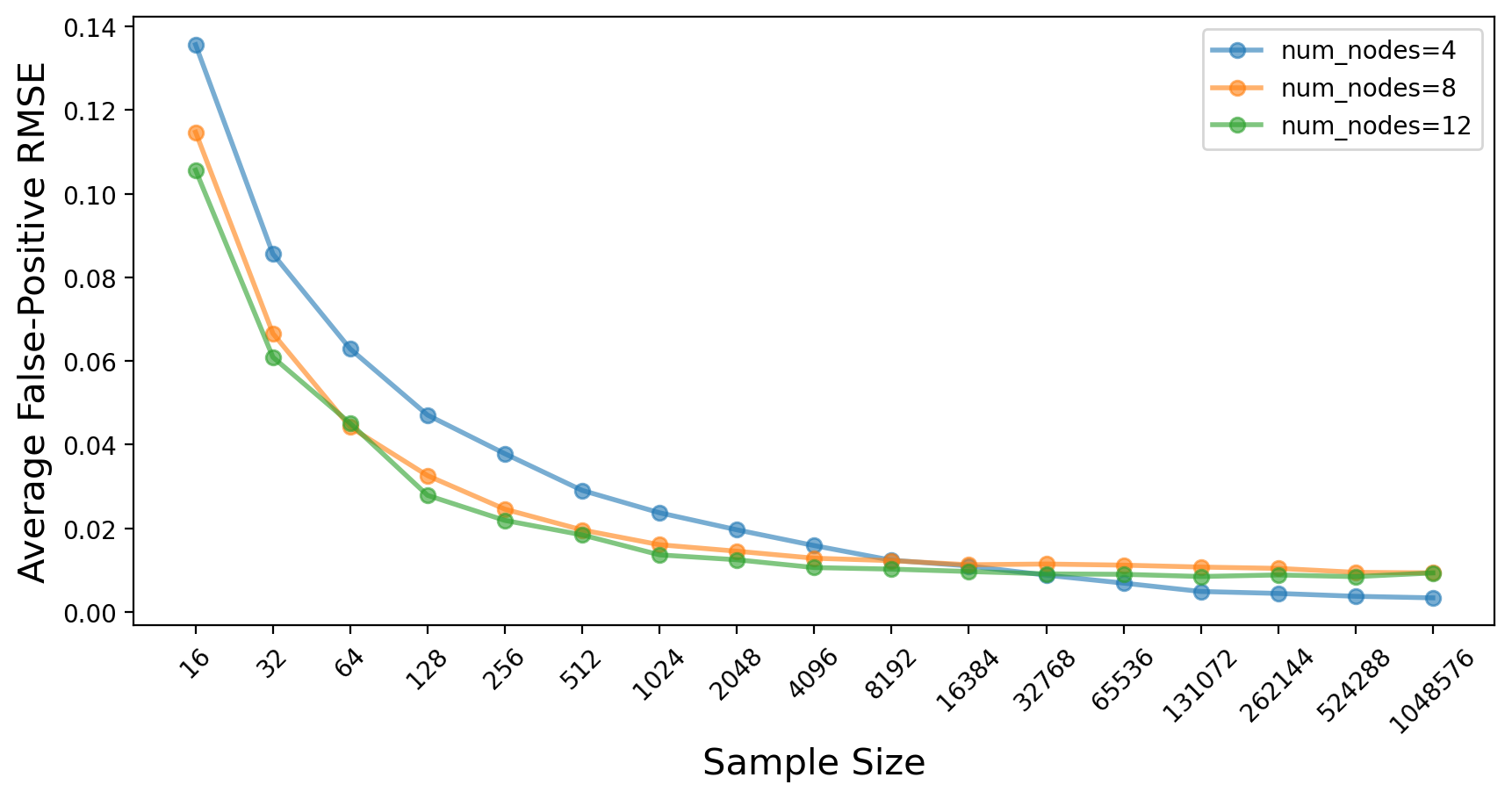}
		\caption[]%
        {{\small False-Positive RMSE}}   
		\label{fig:fp-rmse}
  \end{subfigure}
  \hfill
  \begin{subfigure}[t]{.45\textwidth}
    \centering
    \includegraphics[width=\linewidth]{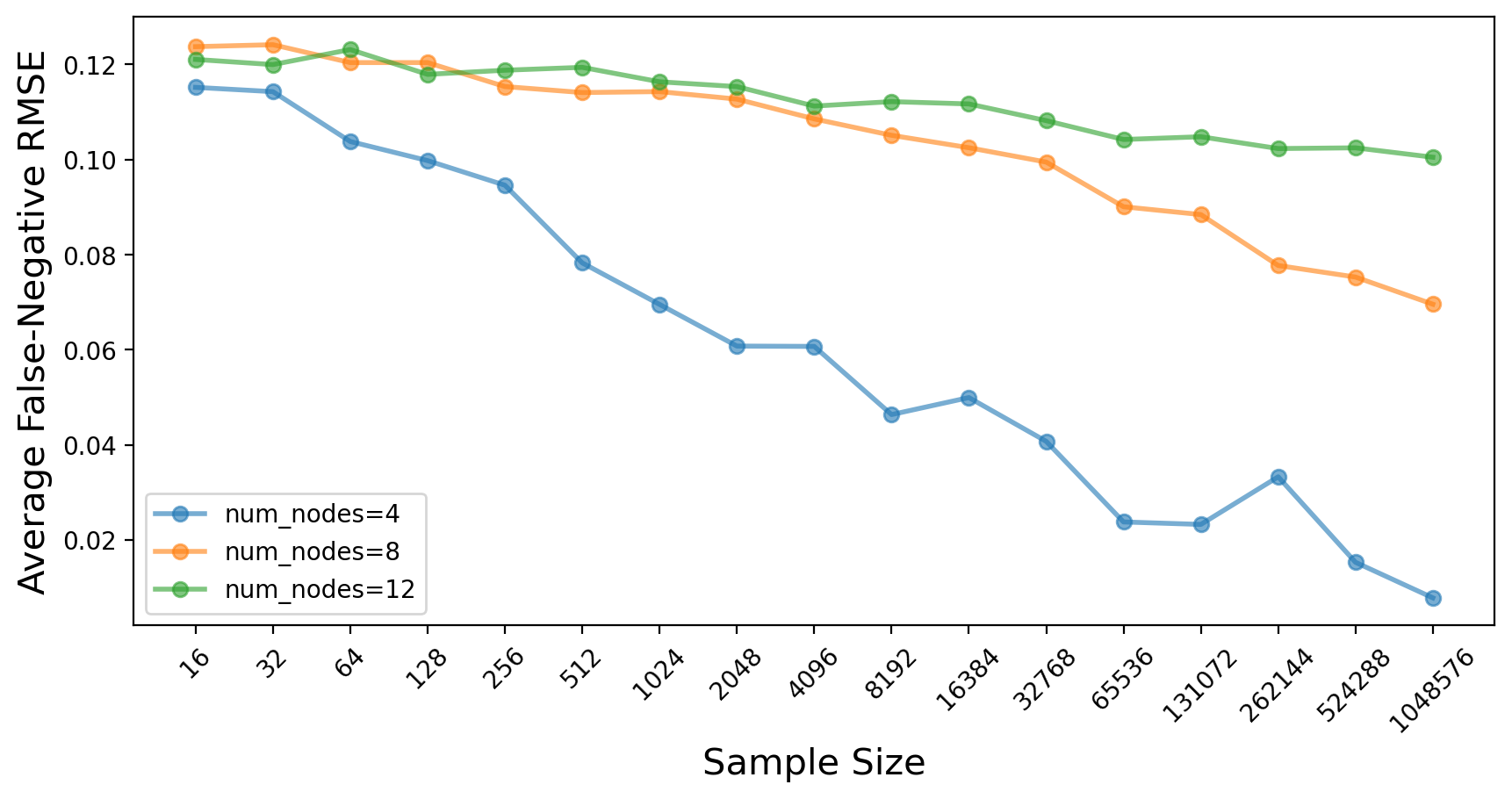}
		\caption[]%
        {{\small False-Negative RMSE}}    
		\label{fig:fn-rmse}
  \end{subfigure}
  \caption{Performance of Algorithm \ref{algo:general-algo} as sample size and number of nodes increase} 
\label{fig:sample-variation}
\end{figure}

%%%%%%%%%%%%%%%%%%%%%%%%%%%%%%%%%%%%%%%%%%%%%%%%%%%%%%%%%%%%%%%%%%%%%%%%%%%%%%

\textbf{Results Discussion:} Figure \ref{fig:sample-variation} presents four plots that demonstrate
performance of our algorithm as sample size $M$ varies in $\{2^4,2^5,\ldots,2^{20}\}$. We also vary the number of nodes $N$ in $\{4,8,12\}$ and show separate plots for each $N$ in each of the figures. The four plots in Figure \ref{fig:sample-variation}, demonstrate four different accuracy metrics we describe in Appendix \ref{appendix:exp-metrics}. In Figure \ref{fig:recall}, we plot the average recall of intervention targets as $M$ increases. Recall for a single input instance is the number of intervention targets in the input that are identified in the output, as defined in Appendix \ref{appendix:exp-metrics}. Average recall is the average of this over all random instances generated in the simulation. We observe a general trend of increase in the recall as we increase the number of samples. Also, a relatively larger number of samples are required to achieve the same level of recall for mixtures generated from CBN with a large number of nodes as compared to smaller ones. This trend is expected as Algorithm \ref{algo:general-algo} estimates the intervention targets by sequentially adding nodes to them. Hence for larger-sized CBNs, the error accumulated is larger as compared to smaller ones.

In Figure \ref{fig:rmse}, we plot the average root-mean-squared error (RMSE) between the estimated and actual mixing coefficients. For each input, RMSE is calculated using the definition supplied in Appendix \ref{appendix:exp-metrics}. Then it is averaged over all the random input instances. We 
observe a fast decrease in the average RMSE as $M$ increases. We also observe that the average RMSE is higher for higher $N$. This is also expected since for distributions on larger number of variables, more samples will be needed to estimate marginal probabilities accurately.

In Figure \ref{fig:fp-rmse}, we plot average False-Positive RMSE (Section Appendix \ref{appendix:exp-metrics}) or FP-RMSE as $M$ increases. For each input instance, FP-RMSE computes the RMSE in mixing proportions for components which are not present in actual target set but predicted by our algorithm. This is then averaged over all the random input instances. For each value of $N$, we observe a similar decreasing trend in this plot showing that incorrect targets in our output have very small mixing proportions (as sample size increases) and therefore even if they are present in the output their contribution is insignificant.

In Figure \ref{fig:fn-rmse}, we plot average False-Negative RMSE (Section Appendix \ref{appendix:exp-metrics}) or FN-RMSE as $M$ increases. For each input instance, FN-RMSE computes the RMSE in mixing proportions for components present in the actual target but not present in the output targets. This is then averaged over all the random input instances. Even though we observe a clear decreasing trend in this situation as well, the rate is much slower as $N$ increases. This implies that the sample complexity of our algorithm is high and it might need too many samples to correctly identify the coefficients of targets present in the input. Reducing the sample complexity is an interesting research direction which we plan to pursue in a future work.

In Figure \ref{fig:graph-comparison}, we demonstrate and compare performance of our Algorithm (as $M,N$ increase), for CBNs generated using different random graph models (Scale-Free and Erd\"os-R\'enyi). We observe no significant difference in performance and make a conjecture that only high level graph parameters (such as number of nodes, edges, in-degree etc.) might be having an impact on performance and the topology (given these parameters) might not be that crucial.

To further understand the performance  of our algorithm with respect to the number of nodes, in Figure \ref{fig:node-variation}, we plot the Average Recall and Average RMSE as number of nodes varies from $4$ to $32$, for a fixed sample size of $\sim10^6$. We observe that recall decreases and RMSE increases very quickly as number of nodes increase. Even though this is expected since error is accumulated as we successively add nodes and find new intervention targets, such performance for a very large sample size indicates bad dependence of sample complexity on the number of nodes. Improving this needs more exploration and is left for future work.

\textbf{Limitations and Future Directions:} The increasing trend in recall and decreasing trend in RMSE of mixing coefficients shows promise. But the current algorithm appears to be expensive in terms of sample complexity, especially for mixture generated from larger graphs as seen in Figures \ref{fig:recall}, \ref{fig:fn-rmse} and Figure \ref{fig:node-variation}. Hence, it will be interesting to explore directions which could reduce sample complexity. We leave this for future work. Another limitation is the absence of baseline works to compare to. Since, ours is the first paper that proves identifiability of such mixtures and gives the first such algorithm, there are no prior works to compare against. In future, we plan to compare our algorithm on a related or downstream task that might have been explored in other works such as \cite{Thiesson1998, Squires2020, Jaber2020}.

\section{Conclusion}
\label{sec:conclusion}
In this paper, we investigated the problem of identifying individual intervention targets from a mixture of interventions on a causal Bayesian Network. This problem is well motivated from the real-world scenario wherein experiments/interventions are accompanied by stochastic hidden off-target effects. We modeled this problem as a mixture of intervention distributions and constructed examples to show that, in general, it is impossible to identify all targets in it. Then, we proposed a mild \emph{positivity} assumption on the underlying network and a very reasonable \emph{exclusion} assumption on the intervention targets that can appear in the mixture distribution. Using these assumptions we proved that given access to the underlying CBN and the mixture distribution, there is a unique set of intervention targets that satisfies our \emph{exclusion} assumption and also generates the mixture. Our uniqueness proof also provides an algorithm that uses access to the underlying distributions and efficiently identifies all the targets along with their coefficients in the mixture. In order to work with finitely many samples from the distributions, we created a small modification to our algorithm and validated it's performance using simulated experiments. We tested our algorithm and bench-marked its performance as the number of samples and nodes increased. As future work, we plan to investigate algorithms to recover targets in such mixtures using a smaller number of samples.
Another interesting direction is to use limited access to the underlying CBN while recovering the targets. This can be very useful in situations where sufficient data or prior knowledge might not be available to pin down the CBN. Solving the identifiability problem when the CBN has unobserved confounders might be a good first step in this direction.
\section{Acknowledgements}
This work was done during an internship of the first author (Abhinav Kumar) under the guidance and mentorship of the second author (Gaurav Sinha), during the period January-August 2020. Abhinav Kumar would like to thank Adobe Research India for hosting him during this period and providing facilities for a fruitful and enjoyable research experience. Abhinav Kumar did this internship as part of his undergraduate thesis offered by his undergraduate institution BITS Pilani, Hyderabad. He would like to thank the institution for this opportunity. Gaurav Sinha would like to thank his mentees Aurghya Maiti, Pulkit Goel, Naman Poddar and Ayush Chauhan who were part of an older internship where the seed of this work was planted. The authors would like to thank anonymous reviewers for their very helpful comments which helped in greatly improving the presentation of this paper.

\bibliographystyle{alpha}
\bibliography{main}

\appendix
\section{Proofs from Section \ref{section:proof-main-theorem}}
\label{appendix:lemma-sys-eqn}
In this section, we provide missing proofs of the lemmas stated in Section \ref{section:proof-main-theorem}. 

\subsection{Proof of Lemma \ref{lemma:sys-eqn}}
\label{proof:lemma-sys-eqn}
Note that we have assumed $a_i > 0$ for all $i\in [k]$. We iterate from $i=1$ to $k-1$ and apply the following row transformations on our matrix.
\begin{equation*}
   \begin{split}
    R_i&\mapsto R_i-(\frac{a_i}{a_k})R_k\\
\end{split} 
\end{equation*}

This results in the following linear system.
\begin{equation*}
\label{eq:sys-rref}
    \left.
        \left.
            \begin{bmatrix}
            c & 0 & . & -\frac{a_1}{a_k}c \\
            0 & c & . & -\frac{a_2}{a_k}c \\
            . & . &. & . \\
            . & . &. & . \\
            -a_k & -a_k & . & c-a_k \\
            \end{bmatrix}
        \right.
            \begin{bmatrix}
            x_1\\
            x_2\\
            .\\
            .\\
            x_k\\
            \end{bmatrix}
    \right.=
        \begin{bmatrix}
        \tilde{b}_1\\
        \tilde{b}_2\\
        .\\
        .\\
        \tilde{b}_k\\
        \end{bmatrix}
\end{equation*}
where $\tilde{b}_i=(b_i-\frac{a_i}{a_k}b_k)$ for all $i\in[k-1]$ and $\tilde{b}_k=b_k$. Since $c>0$, this matrix is easily seen to have rank $\geq k-1$. Using $c =\sum_{i=1}^k a_i$ we can easily check that the last row $R_k$ is $\frac{-a_k}{c}(R_1+\ldots + R_{k-1})$, implying that it has rank $k-1$.  Since the system is assumed to have at least one solution, it actually has infinitely many solutions. The null space of this matrix is the one dimensional space spanned by $\bm{w} = (\frac{a_1}{a_k}, \ldots, \frac{a_k}{a_k})^T$ which has all positive entries since $a_i > 0, i\in [k]$. Assume there are two distinct solutions $\bm{u}= (u_1,\ldots,u_k)^T$ and $\bm{v} = (v_1,\ldots,v_k)^T$ in $\mathbb{R}_{\geq 0}^k$ such that both have at least one of their co-ordinates $0$, then $\bm{u}-\bm{v}$ belongs to the null space i.e. $\bm{u} - \bm{v} = \lambda \bm{w}$ for some non-zero scalar $\lambda$. If the same co-ordinate of $\bm{u}, \bm{v}$ are $0$ i.e. for some $i\in [k]$, $u_i = v_i = 0$, then since $\frac{a_i}{a_k}$ is non-zero $\lambda=0 \Rightarrow \bm{u} = \bm{v}$, a contradiction. If different co-ordinates of $\bm{u}, \bm{v}$ are $0$, say $u_1=v_2=0$, then since $\bm{u}, \bm{v} \in \mathbb{R}_{\geq 0}^k$, $u_1-v_1$ is negative and $u_2-v_2$ is positive. This is not possible since both these quantities should have the same sign as $\lambda$, as all co-ordinates of $\bm{w}$ are strictly positive. Therefore we arrive at a contradiction and there is a unique solution.

Having proved this uniqueness, finding the solution is easy. We perform the above-mentioned row transformations and obtain the general solution. Then for each $i\in [k]$, we set $x_i=0$ and try to solve for the other variables $x_1,\ldots,x_{i-1}, x_{i+1}, \ldots,x_k \in \mathbb{R}_{\geq 0}$. By the above argument, we will get a valid solution for only one such $i$, which we return as the unique solution. Clearly it takes $k^{O(1)}$ time.

\subsection{Proof of Lemma \ref{lemma:marginalization}}
\label{proof:lemma-marginalization}
\begin{enumerate}
    \item Since $V_1\prec\ldots\prec V_{N+1}$ is a topological order, marginalizing over $V_{N+1}$ in the factorization $\mathbb{P}(\bm{V}) = \prod_{i=1}^{N+1}\mathbb{P}(V_i | \bm{pa}(V_i))$ will give the factorization $\mathbb{P}(\bm{V}_N) = \prod_{i=1}^{N}\mathbb{P}(V_i | \bm{pa}(V_i))$ which is the factorization over $\mathcal{G}_N$.
    \item Let $\mathcal{T} = \{(\pi_1, \bm{t}_1), \ldots , (\pi_m, \bm{t}_m)\}$ be a set of intervention tuples generating $\mathbb{P}_{mix}(\bm{V})$ and $C_{V_{N+1}} = \{v^1, \ldots, v^k\}$. Marginalizing with respect to $V_{N+1}$ for a single target $\bm{t}_i, i\in [m]$ looks like,
        \begin{equation}
        \label{equation:marginal-identity}
         \sum\limits_{l=1}^k \mathbb{P}_{\bm{t}_i}(\bm{V}_N, v^l) = \left\{
             \begin{array}{@{}l@{\thickspace}l}
              \mathbb{P}_{\bm{t}_i}(\bm{V}_N)  &: V_{N+1}\notin T_i\\
              \mathbb{P}_{\bm{t}_i\setminus\{v^j\}}(\bm{V}_N) &: v^j \in \bm{t}_i
             \end{array}
          \right.   
        \end{equation}
        
        Applying this marginalization to Equation \ref{equation:input-mixture}, i.e., $\mathbb{P}_{mix}(\bm{V}) = \sum_{i=1}^m \pi_i \mathbb{P}_{\bm{t}_i}(\bm{V})$, gives a convex linear combination of different $\mathbb{P}_{\bm{s}}(\bm{V}_N)$, where $\bm{s}$ are values of some set of variables $\bm{S} \subset \bm{V}_N$, implying that $\mathbb{P}_{mix}(\bm{V}_N)$ is a mixture of interventions on $\mathcal{G}_N$.
        
    \item This is straight-forward. To query $\mathbb{P}(v_1, \ldots, v_N)$, we query $\mathbb{P}(v_1, \ldots, v_N, v_{N+1})$ for all $v_{N+1}\in C_{V_{N+1}}$ and sum them up. The same can be done to create access for $\mathbb{P}_{mix}(\bm{V}_N)$. Since we are summing at most $k_{max}$ terms, in $O(k_{max})$ time we can simulate access to both $\mathbb{P}(\bm{V}_N)$, $\mathbb{P}_{mix}(\bm{V}_N)$.
\end{enumerate}
\subsection{Proof of Lemma \ref{lemma:lifting}}
\label{proof:lemma-lifting}
This follows by the marginalization equation (Equation \ref{equation:marginal-identity} in Appendix \ref{proof:lemma-marginalization}).

\subsection{Proof of Lemma \ref{lemma:setting}}
\label{proof:lemma-setting}
 Let $\bm{s}\in \mathcal{S}_r$ where $r> i$. This means that $\bm{s}$ is either $\bm{s}_r$ or $\bm{s}_r\cup\{v\}$ for some $v\in C_{V_{N+1}}$. We show that 
 $\mathbb{P}_{\bm{s}_r}(\bm{v}_{i,l}) = \mathbb{P}_{\bm{s}_r}(\bm{s}_i\cup\bm{s}_{-i}\cup\{v^l\}) = 0$ for all $v^l\in C_{V_{N+1}}$. The proof for $\bm{s} = \bm{s}_r\cup\{v\}$ is identical. Since $i<r$, we get  that $\bm{s}_{r}\not\subseteq \bm{s}_{i}$. Now there are two cases, either the set of variables $\bm{S}_r \subseteq \bm{S}_i$ or $\bm{S}_r \not\subseteq \bm{S}_i$. In the first case, since $\bm{s}_{r}\not\subseteq \bm{s}_{i}$ we get that there is some variable $V_j \in \bm{S}_r, \bm{S}_i$ such that different values $v_j^r$ and $v_j^i$ belong to $\bm{s}_r$ and $\bm{s}_i$ respectively implying that $\mathbb{P}_{\bm{s}_r}(\bm{s}_i\cup\bm{s}_{-i}\cup\{v\})=0$. In the second case, there is some variable $V_j \in \bm{S}_r$ ($j\in [N]$) that is not in $\bm{S}_i$. Note that the ``missing value'' $\Bar{v}_j\in C_{V_j}$ (i.e. the one that is missing from all targets $\bm{s}_j, j\in [q]$) belongs to $\bm{s}_{-i}$ (since $V_j \notin \bm{S}_i$) but it cannot belong to $\bm{s}_r$ (since it is missing from all $\bm{s}_1, \ldots,\bm{s}_q$) $\Rightarrow \mathbb{P}_{\bm{s}_r}(\bm{s}_i\cup\bm{s}_{-i}\cup\{v\})=0$.

\subsection{Proof of Lemma \ref{lemma:scalar-relationship}}
\label{proof:lemma-scalar-relationship}
Let $\bm{s}\in \mathcal{S}_i$. Note that $\bm{s}$ is either $\bm{s}_i$ or $\bm{s}_i\cup\{v\}$ for some $v\in C_{V_{N+1}}$. Using Equation \ref{equation:marginal-identity} we get that for all 
$\bm{s}\in \mathcal{S}_i$, marginalization of $V_{N+1}$ in $\pi_{\bm{s}}\mathbb{P}_{\bm{s}}(\bm{V})$ gives $\pi_{\bm{s}}\mathbb{P}_{\bm{s}_i}(\bm{V}_N)$. Marginalizing $V_{N+1}$ in Equation \ref{equation:lifted-mixture}, converts the left hand side to $\mathbb{P}_{mix}(\bm{V}_N)$ and right-hand side to $\sum_{i=1}^q (\sum_{\bm{s}\in \mathcal{S}_i} \pi_{\bm{s}}) \mathbb{P}_{\bm{s}_i}(\bm{V}_N)$ giving a set of intervention tuples that generates $\mathbb{P}_{mix}(\bm{V}_N)$. By the inductive hypothesis, $\mathbb{P}_{mix}(\bm{V}_N)$ is generated by the unique set of intervention tuples $\mathcal{S}$ satisfying Assumption \ref{assm:intervention-exclusion}. Since the intervention targets in $\mathcal{S}$ and the ones in the set of intervention tuples we just obtained are the same, using uniqueness of $\mathcal{S}$ we get that $\mu_i = \sum\limits_{\bm{s}\in \mathcal{S}_i}\pi_{\bm{s}}$.

\subsection{Proof of Lemma \ref{lemma:si-choice}}
\label{proof:lemma-si-choice}
Note that since $V_{N+1}$ is the last node in the topological order, using the definition of interventions, we can conclude that,
\[
\mathbb{P}_{\bm{s}_i\cup\{v^{l^\prime}\}}
    (\bm{s}_{i}\cup\bm{s}_{-i} \cup \{v^l\}) = \mathbb{P}_{\bm{s}_i}
    (\bm{s}_{i}\cup\bm{s}_{-i})\delta_{v^{l^\prime}, v^l}
\]
where $v^l, v^{l^\prime} \in C_{V_{N+1}} = \{v^1, \ldots, v^k\}$. Recall that $\bm{v}_{i,l} = \bm{s}_i\cup \bm{s}_{-i}\cup \{v^l\}$. Now, on substituting for $\Delta$ using Equation \ref{equation:reduced-si} into Equation \ref{equation:further-reduced-si}, we obtain,
\[
\begin{split}
    & \mathbb{P}_{mix}(\bm{v}_{i,l}) 
    - \mu_i \mathbb{P}_{\bm{s}_i}(\bm{v}_{i,l})
    -\sum\limits_{j=1}^{i-1}\sum\limits_{\bm{s} \in \mathcal{S}_j} \pi_{\bm{s}}\mathbb{P}_{\bm{s}}
    (\bm{v}_{i,l}) =\\ 
    &\sum\limits_{l^{\prime}\in [k]}  \pi_{\bm{s}_i\cup\{v^{l^\prime}\}} \biggl(  \mathbb{P}_{\bm{s}_i}
    (\bm{s}_{i}\cup\bm{s}_{-i})\delta_{v^{l^\prime}, v^l}  
    - \mathbb{P}_{\bm{s}_i}
    (\bm{v}_{i,l})\biggr)
\end{split}
\]
Note that all the unknown variables are on the right-hand side of this equation. Varying $l^{\prime}\in [k]$, gives us a linear system of equations satisfied by scalars $\pi_{\bm{s}_i\cup \{v^{l^\prime}\}}$.

\begin{equation*}
\label{eq:t-base-sys}
\left.
\left.
    \begin{bmatrix}
        c - a_{1} & - a_{1} & . &. & - a_{1}\\
        - a_{2} & c - a_{2} & . & .& - a_{2}\\
        . & . & . &. &. \\
        - a_{k} & - a_{k} & . & . & c - a_{k}\\
    \end{bmatrix}
\right.
    \begin{bmatrix}
        x_1\\
        x_2\\
        .\\
        x_k\\
    \end{bmatrix}
\right.=
    \begin{bmatrix}
    b_1\\
    b_2\\
    .\\
    b_k\\
    \end{bmatrix}
\end{equation*}
In the above system, we have renamed the known values as follows. For $l\in [k]$, denote 
\begin{equation*}
    \begin{split}
  & a_l = \mathbb{P}_{\bm{s}_i}(\bm{v}_{i,l}),\\
  & b_l= \mathbb{P}_{mix}(\bm{v}_{i,l}) 
    - \mu_i \mathbb{P}_{\bm{s}_i}(\bm{v}_{i,l})
    -\sum\limits_{j=1}^{i-1}\sum\limits_{\bm{s} \in \mathcal{S}_j} \pi_{\bm{s}}\mathbb{P}_{\bm{s}}
    (\bm{v}_{i,l}),\\
  & c = \mathbb{P}_{\bm{s}_i}(\bm{s}_i\cup\bm{s}_{-i})
\end{split}
\end{equation*}

All $a_l$'s are probabilities from interventional distributions and can be computed as product of conditional probabilities. Thus, by Assumption \ref{assm:positivity}, $a_l >0$ for all $l\in [k]$. It's easy to see that $c=\sum_{l\in [k]}a_l$, by the sum rule of probability. By statement of Lemma \ref{lemma:lifting},  for each $i\in [q]$ and $l^\prime\in [k]$, $\pi_{\bm{s}_i\cup\{v^{l^\prime}\}} \geq 0$. Since we are only considering set of intervention tuples  which satisfy Assumption \ref{assm:intervention-exclusion}, there is some $l^\prime\in [k]$ such that $\pi_{\bm{s}_i\cup\{v^{l^\prime}\}}=0$. On constraining the variables $x_1,\ldots,x_k$ in the above system to these conditions (i.e. $x_{l^\prime}\geq 0$ for all $l^\prime\in [k]$ and $x_{l^\prime}=0$ for some $l^\prime\in [k]$), by Lemma \ref{lemma:sys-eqn} we are guaranteed a unique solution. Therefore there is a unique tuple $(\pi_{\bm{s}_i\cup\{v^1\}}, \ldots,\pi_{\bm{s}_i\cup\{v^k\}})$ satisfying these requirements $\Rightarrow$ Equation \ref{equation:further-reduced-si} has a unique solution which is easily computed in $k^{O(1)}$ time using the technique described in proof of Lemma \ref{lemma:sys-eqn}.

\section{Non-Necessity of Assumption \ref{assm:positivity}}
\label{appendix:assumption-non-necessity}
With the help of an example we argue that Assumption \ref{assm:positivity} is not necessary in Theorem \ref{theorem:general-identify}.

\begin{example}
\label{eg:positivity-counterexample}
Consider a causal Bayesian Network 
\[
V_1 \rightarrow V_2
\]
defined over two binary variable $\bm{V}=\{V_1,V_2\}$ with $C_{V_i}=\{0,1\}$, $i\in [2]$. Further, define CPDs,
$\mathbb{P}(V_1 = 1)=0.5$, $\mathbb{P}(V_2=1|V_1=0)=0.5$, and $\mathbb{P}(V_2=1|V_1=1)=0$. Clearly $\mathbb{P}(V_2=1,V_1=1)=0$ implying that this CBN doesn't satisfy Assumption \ref{assm:positivity}.

Let $\mathbb{P}_{mix}(\bm{V})$ be a mixture distribution defined as 
\[
\frac{1}{2} \mathbb{P}(\bm{V}|do(V_1=0)) + \frac{1}{2} \mathbb{P}(\bm{V} | do(V_1=0, V_2=0))
\]
 
 This mixture satisfies Assumption \ref{assm:intervention-exclusion}. Our algorithm first marginalizes on $V_2$ and tries to find the unique set of intervention targets for $\mathbb{P}_{mix}(V_1)$. For this sub-problem, all steps of Algorithm \ref{algo:general-algo-inf} go through (since the distribution $\mathbb{P}(V_1)$ satisfies positivity), and the correct components get identified. Note that, for this sub-problem the algorithm identifies that $\mathbb{P}_{mix}(V_1) = \mathbb{P}(V_1|do(V_1=0))$.
 Now it tries to lift this computed target $(V_1=0)$ to targets for the full mixture $\mathbb{P}_{mix}(\bm{V})$.
 
 Since the algorithm does not try to lift the target $(V_1=1)$ (as it was not found as a target for $\mathbb{P}_{mix}(V_1)$), it does not require $\mathbb{P}(V_2 |V_1=1)$ to be non-zero. This can be easily checked in the lifting process described in Section \ref{subsubsection:lifting}. We do not repeat the steps of our algorithm here and encourage the reader to work through the lifting steps outlined in Section \ref{subsubsection:lifting} and obtain unique solutions proving our point that Assumption \ref{assm:positivity} is not necessary and can be weakened.
  \end{example}

\section{Worked-Out Examples}
\label{appendix:workout-example}
In this section, we illustrate the workings of Algorithm \ref{algo:general-algo-inf} (in the main paper) using two worked-out examples. Example \ref{eg:workout-example} is simpler and uses a mixture distribution on a CBN with just two nodes. It does not really require all of the crucial ideas from the lifting procedure described in Section \ref{subsubsection:lifting}. However, we believe it is important since it gives a good broad understanding of the entire algorithm.  Example \ref{eg:workout-example-2} is complicated enough (using a mixture distribution on a CBN with three nodes) to highlight some of the key novelties of our lifting procedure in Section \ref{subsubsection:lifting}. We urge the reader to first work through Example \ref{eg:workout-example} and then through Example \ref{eg:workout-example-2} to get a full understanding of the critical ideas that make our proof of Theorem \ref{theorem:general-identify} work.

\begin{example}
\label{eg:workout-example}
Consider a causal Bayesian Network 
\[
V_1 \rightarrow V_2
\]
defined over two binary variable $\bm{V}=\{V_1,V_2\}$ with $C_{V_i}=\{0,1\}$, $i\in [2]$. Further, define CPDs,
$\mathbb{P}(V_1 = 1)=0.5$, $\mathbb{P}(V_2=1|V_1)=0.5$ for both $V_1=0$ and $V_1=1$. Clearly, this CBN satisfies Assumption \ref{assm:positivity}. Let $\mathbb{P}_{mix}(\bm{V})$ be a mixture distribution defined as 
\[
\frac{1}{2} \mathbb{P}(\bm{V}|do(V_1=0)) + \frac{1}{2} \mathbb{P}(\bm{V} | do(V_1=0, V_2=0))
\]
 
 This mixture satisfies Assumption \ref{assm:intervention-exclusion}. On marginalizing variable $V_2$, the mixture becomes $\mathbb{P}_{mix}(V_1) = \mathbb{P}(V_1|do(V_1=0))$ which also satisfies Assumption \ref{assm:intervention-exclusion}, as a mixture of interventions on the CBN comprising of the single variable $V_1$. A general mixture distribution on variable $V_1$ looks like,
 \begin{equation*}
  \begin{split}
     \pi_0 \mathbb{P}(V_1|do(V_1=0)) + \pi_1 \mathbb{P}(V_1|do(V_1=0)) + (1-\pi_0-\pi_1)\mathbb{P}(V_1)
 \end{split}   
 \end{equation*}

Varying $V_1$ in $\{0,1\}$ gives the system of equations,
\begin{equation*}
   \begin{bmatrix}
        \begin{array}{cc}
             1-0.5 & -0.5 \\
            -0.5 & 1-0.5 \\
        \end{array}
    \end{bmatrix}
    \begin{bmatrix*}[l]
        \pi_0 \\ \pi_1
    \end{bmatrix*}
        =
    \begin{bmatrix*}[l]
        \begin{array}{cc}
             & \mathbb{P}_{mix}(V_1=0)-0.5 \\
             & \mathbb{P}_{mix}(V_1=1)-0.5 \\
        \end{array}
    \end{bmatrix*} 
\end{equation*}

Lemma \ref{lemma:sys-eqn} highlights why such a system has a unique non-negative solution if we assume that at least one of $\pi_0, \pi_1$ is $0$, i.e. the set of intervention tuples we are trying to construct satisfy Assumption \ref{assm:intervention-exclusion}. Lemma \ref{lemma:sys-eqn} also provides an efficient algorithm to find the unique solution giving $\mathbb{P}_{mix}(V_1) = \mathbb{P}(V_1|do(V_1=0))$. Now we have one intervention target $(V_1=0)$ at hand, which we will try to lift. Note that, the possible lifts of such a target are $(V_1=0), (V_1=0, V_2=0), (V_1=0, V_2=1)$. Our next step will search within this space of targets and try to complete the construction. A general solution for mixtures within this space would look like
 \begin{equation*}
  \begin{split}
     \mu_0 \mathbb{P}(\bm{V}|do(V_1=0)) + \mu_1 \mathbb{P}(\bm{V}|do(V_1=0, V_2=0)) + \mu_2\mathbb{P}(\bm{V} | do(V_1=0, V_2=1))
 \end{split}   
 \end{equation*}
 where $\mu_i\geq 0, i\in \{0,1,2\}$. So we want to find $\mu_0, \mu_1,\mu_2$ such that the above general solution becomes equal to $\mathbb{P}_{mix}(\bm{V})$. Since we already know that $\mathbb{P}_{mix}(V_1) = \mathbb{P}(V_1|do(V_1=0))$, marginalizing on $V_2$ implies that $\mu_0+\mu_1+\mu_2=1$. We substitute $\mu_0 = 1-\mu_1-\mu_2$, equate the above general mixture to $\mathbb{P}_{mix}(\bm{V})$, and re-arrange terms to get,
  \begin{equation*}
  \begin{split}
  \mathbb{P}_{mix}(\bm{V}) - 
     \mathbb{P}(\bm{V}|do(V_1=0)) &=\mu_1 (\mathbb{P}(\bm{V}|do(V_1=0, V_2=0))- \mathbb{P}(\bm{V}|do(V_1=0))) \\&+ \mu_2(\mathbb{P}(\bm{V} | do(V_1=0, V_2=1))-\mathbb{P}(\bm{V}|do(V_1=0)))
 \end{split}   
 \end{equation*}
 
 We evaluate this mixture at settings $V_1=0,V_2=0$, and $V_1=0,V_2=1$, to get a system of linear equations in $(\mu_1, \mu_2)$,
\begin{equation*}
   \begin{bmatrix}
        \begin{array}{cc}
             1-0.5 & -0.5 \\
            -0.5 & 1-0.5 \\
        \end{array}
    \end{bmatrix}
    \begin{bmatrix*}[l]
        \mu_1 \\ \mu_2
    \end{bmatrix*}
        =
    \begin{bmatrix*}[l]
        \begin{array}{cc}
             & \mathbb{P}_{mix}(0,0)-0.5 \\
             & \mathbb{P}_{mix}(0,1)-0.5 \\
        \end{array}
    \end{bmatrix*} 
\end{equation*}
  
Again, since we are solving for a set of intervention tuples satisfying Assumption \ref{assm:intervention-exclusion}, one of $\mu_1, \mu_2$ would be $0$. This used with Lemma \ref{lemma:sys-eqn} gives the unique solution $\mu_1=0.5, \pi_2=0 \Rightarrow \mu_0=0.5$, thereby identifying the correct set of intervention tuples.

\end{example}  
We would like to note that the example we illustrated above is rather simple and does not capture some main non-trivial aspects of our Algorithm. However, we think it is important as a warm up exercise. A more involved example that would bring out the crucial ideas of our proof is presented below in Example \ref{eg:workout-example-2}.
\begin{example}
\label{eg:workout-example-2}
Consider a causal Bayesian Network defined over three binary variables $\bm{V}=\{V_1,V_2, V_3\}$ taking values in $C_{V_i}=\{0,1\}$, $i\in [3]$, with
$\bm{pa}(V_1)=\varnothing$, $\bm{pa}(V_2)=\{V_1\}$ and $\bm{pa}(V_3) = \{V_1,V_2\}$. Let CPDs be such that Assumption \ref{assm:positivity} is satisfied. Consider a mixture distribution 
\begin{equation*}
    \begin{split}
        \mathbb{P}_{mix}(\bm{V}) = \mu_0 \mathbb{P}(\bm{V}|do(V_1=0)) + \mu_1 \mathbb{P}(\bm{V} | do(V_1=0, V_2=0)) +\mu_2 \mathbb{P}(\bm{V} | do(V_1=0, V_2=0, V_3=0))
    \end{split}
\end{equation*}
with positive scalars $\mu_0,\mu_1,\mu_2$ satisfying $\mu_0+\mu_1+\mu_2=1$. Marginalizing on $V_3$, gives,
\begin{equation*}
    \begin{split}
        \mathbb{P}_{mix}(V_1,V_2) = \pi_0 \mathbb{P}(V_1, V_2|do(V_1=0)) + \pi_1 \mathbb{P}(V_1, V_2 | do(V_1=0, V_2=0))
    \end{split}
\end{equation*}
Our inductive hypothesis assumes that this mixture on smaller number of nodes is generated by a unique set of intervention tuples that satisfies Assumption \ref{assm:intervention-exclusion} and also that this set can be efficiently computed. This will give us access to the two scalars $\pi_0, \pi_1$ and to the two targets
 $(V_1=0)$ and $(V_1=0, V_2=0)$ (we call them the currently computed targets). We need to lift these targets to targets for the original mixture distribution like we did in Example \ref{eg:workout-example}. However, the situation is not as simple here. Note that $(V_1=0)$ can be lifted to one of the three targets $(V_1=0), (V_1=0, V_3=0), (V_1=0, V_3=1)$. Similarly $(V_1=0, V_2=0)$ can be lifted to one of the three targets $(V_1=0, V_2=0), (V_1=0, V_2=0, V_3=0), (V_1=0, V_2=0, V_3=1)$. So there are $6$ possible targets in the original mixture and therefore a general solution for our mixture can be written using $6$ new variables (say $\delta_0, \delta_1, \delta_2, \delta_3, \delta_4, \delta_5$) such that,
\begin{equation}
\label{equation:master}
    \begin{split}
        \mathbb{P}_{mix}(\bm{V}) &= \delta_0 \mathbb{P}(\bm{V}|do(V_1=0)) + \delta_1 \mathbb{P}(\bm{V} | do(V_1=0, V_3=0)) +\delta_2 \mathbb{P}(\bm{V} | do(V_1=0, V_3=1)) \\&+ \delta_3 \mathbb{P}(\bm{V}|do(V_1=0, V_2=0)) + \delta_4 \mathbb{P}(\bm{V} | do(V_1=0, V_2=0, V_3=0))\\ &+\delta_5 \mathbb{P}(\bm{V} | do(V_1=0, V_2=0, V_3=1))
    \end{split}
\end{equation}
where all $\delta_i$ are non-negative. By marginalizing on $V_3$, and using the solution we got from the inductive hypothesis (like we did in Example \ref{eg:workout-example}), we can show that,
\[
\pi_0 = \delta_0+\delta_1 + \delta_2,\hspace{1em}\pi_1 =  \delta_3 + \delta_4 + \delta_5
\]
Now the two main non trivial ingredients needed from here are:
\begin{itemize}
    \item Deciding the order in which the currently computed targets should be lifted, and
    \item Deciding the settings for $\bm{V}$ that would give linear systems where we can argue about unique solutions like in Example \ref{eg:workout-example}.
\end{itemize}

For the first one, we lift the currently computed targets in an order which does not violate set inclusion for these targets, i.e. we first lift $(V_1=0)$ and then lift $(V_1=0, V_2=0)$. This can be done by considering any extension of the set inclusion partial order on these targets. Then for lifting the target $(V_1=0)$, we choose to evaluate on the settings $\bm{v}_1 = (V_1=0, V_2=1, V_3=0), \bm{v}_2= (V_1=0, V_2=1, V_3=1)$. Here, we pick the value of $V_2$ (i.e. $V_2=1$) that is missing from the currently computed target under consideration i.e. $(V_1=0)$. There will always be one such missing value (it follows from Assumption \ref{assm:intervention-exclusion}). Evaluating on these settings simplifies our equation drastically. For $l\in [2]$, we get,
\begin{equation*}
    \begin{split}
        \mathbb{P}_{mix}(\bm{v}_l) = \delta_0 \mathbb{P}(\bm{v}_l|do(V_1=0)) + \delta_1 \mathbb{P}(\bm{v}_l | do(V_1=0, V_3=0)) +\delta_2 \mathbb{P}(\bm{v}_l | do(V_1=0, V_3=1))
    \end{split}
\end{equation*}
Basically all possible lifts of the other currently computed target i.e. $(V_1=0, V_2=0)$ vanish and we have a much simpler system of equations at hand. From here the solution follows exactly like the previous example. We substitute $\delta_0 = \pi_0-\delta_1-\delta_2$ and rearrange to get a linear system in $2$ equations and $2$ variables $\delta_1, \delta_2$. Similar to the argument made in Example \ref{eg:workout-example}, at least one of $\delta_1, \delta_2$ will be $0$ and therefore this system has a unique solution (using Lemma \ref{lemma:sys-eqn}) giving values of $\delta_0, \delta_1, \delta_2$. These can then be substituted back in Equation \ref{equation:master} reducing the number of variables to $3$ (i.e. $\delta_3, \delta_4, \delta_5$). Again we substitute $\delta_3 = \pi_1 - \delta_4-\delta_5$ and reduce the equation to just two unknowns. Finally by  using settings $\bm{v}_1 = (V_1=0, V_2=0, V_3=0), \bm{v}_2= (V_1=0, V_2=0, V_3=1)$ in Equation \ref{equation:master} we will be left with $2$ equations in $2$ variables. Exactly like our argument for lifting of $(V_1=0)$ (i.e. using Lemma \ref{lemma:sys-eqn}), we can show that, when at least one of $\delta_4, \delta_5$ is $0$, this system has a unique solution as well. Lemma \ref{lemma:sys-eqn} would efficiently give us the values of $\delta_4, \delta_5$ and therefore of $\delta_3$. Thus, we uniquely identify a set of intervention tuples that satisfies Assumption \ref{assm:intervention-exclusion} and generates $\mathbb{P}_{mix}(\bm{V})$.
\end{example}
\section{Finite Sample Algorithm}
\label{appendix:general-algo}

In a real world scenario, we will only have finitely many samples from the distributions $\mathbb{P}(\bm{V})$ and $\mathbb{P}_{mix}(\bm{V})$. In this situation, we modify Algorithm \ref{algo:general-algo-inf}  slightly to make it work with finitely many samples. The resulting algorithm is presented in Algorithm \ref{algo:general-algo}. Let the sets containing the samples be $\mathcal{B}=\{\bm{b}_1,\ldots,\bm{b}_{M}\}$ where $\bm{b}_i \sim \mathbb{P}(\bm{V})$ and $\mathcal{B}_{mix}=\{\bm{b}_1^{mix},\ldots,\bm{b}_{M}^{mix}\}$  where $\bm{b}_j^{mix} \sim \mathbb{P}_{mix}(\bm{V})$. 
As a preprocessing step, we estimate the distributions $\mathbb{P}(\bm{V})$ and $\mathbb{P}_{mix}(\bm{V})$ as $\hat{\mathbb{P}}(\bm{V})$ and $\hat{\mathbb{P}}_{mix}(\bm{V})$ (respectively) using samples in $\mathcal{B}$ and $\mathcal{B}_{mix}$ respectively. $\hat{\mathbb{P}}(\bm{V})$ is estimated by estimating all the CPDs using maximum likelihood estimation (MLE). In our implementation, we use a function from the pgmpy library (\cite{Ankan2015}), to compute these MLE estimates. We enforce Assumption \ref{assm:positivity} on $\hat{\mathbb{P}}(\bm{V})$, by enforcing it on all it's CPDs, using a small positive parameter $\delta$ (chosen by us). When $\hat{\mathbb{P}}(v_i | \bm{pa}(v_i)) = 0$ for some $v_i$ and some setting of parents $\bm{pa}(v_i)$, we update,
\[
\hat{\mathbb{P}}(V_i | \bm{pa}(v_i)) \gets \hat{\mathbb{P}}(V_i | \bm{pa}(v_i)) + \delta
\]
for all values of $V_i$, and then re-normalize to make it a probability distribution again. Marginal $\hat{\mathbb{P}}_{mix}(\bm{V} = \bm{v})$ is calculated using relative frequency of the occurence of $\bm{V} = \bm{v}$ in the samples inside $\mathcal{B}_{mix}$. These estimated distributions are then used as inputs in Algorithm \ref{algo:general-algo}. We use another small positive parameter $\epsilon$ as input to Algorithm \ref{algo:general-algo} which prunes each recovered set of intervention tuples computed during the algorithm, by only keeping mixing coefficients greater that $\epsilon$.
It's easy to see that the time complexity of Algorithm \ref{algo:general-algo}, including the estimation of probabilities from samples is:
\begin{equation*}
    \biggr(\frac{Nk_{max}^dM}{\epsilon}\biggr)^{O(1)}
\end{equation*}
Here $N$ is number of nodes in the CBN $\mathcal{G}$, $d$ is the maximum in-degree of any node in $\mathcal{G}$, $k_{max}$ is maximum number of values that any node in $\mathcal{G}$ can take and $M$ is the number of samples present in $\mathcal{B}$ and $\mathcal{B}_{mix}$.

\begin{remark}
Since our algorithm's run-time depends on $\epsilon$, we need to carefully select it's value. Setting it too small could increase the run time whereas setting it too big could lead to wrongfully pruning intervention targets (with significant mixing proportions) present in the mixture.
\end{remark}

\begin{algorithm}[!ht]
\caption{DISENTANGLE-FINITE}
\label{algo:general-algo}
\SetAlgoLined
\DontPrintSemicolon
\SetKwInOut{Input}{input}
\SetKwInOut{Output}{output}
\SetKw{Return}{return}

\Input{$\bm{V}, \mathcal{G} \hat{\mathbb{P}}(\bm{V}), \hat{\mathbb{P}}_{mix}(\bm{V}), \epsilon$}

\Output{Set of intervention tuples $\mathcal{T}$}
\BlankLine

\begin{enumerate}

    \item When $|\bm{V}|=1$, setup the linear system in Equation \ref{eq:t-basecase-sys} (say $\bm{Ax}=\bm{b}$) using the estimated distributions. Similar to the technique described in Lemma \ref{lemma:sys-eqn}, set one variable to $0$ at a time giving solution $(\pi_1, \ldots, \pi_k)$ corresponding to targets $(\bm{t}_1,\ldots,\bm{t}_k)$ as described in Section \ref{subsection:base-case}. For every variable that is set to $0$, create a set $\mathcal{T} = \{(\bm{t}_i, \pi_i):i\in [k]\}$ containing the solution. For every such $\mathcal{T}$, iterate through the tuples $(\bm{t}_i, \pi_i)$ in it. If some $\pi_i < 0$, set $\pi_i\gets 0$. Compute the score $r(\mathcal{T}) = \|\bm{A\pi}-\bm{b}\|^2$, where $\bm{\pi} = (\pi_1, \ldots, \pi_k)$. Next, select $\mathcal{T}$ with the smallest value of $r(\mathcal{T})$. For this selected $\mathcal{T}$, check if $1-\sum_{i=1}^k\pi_i < \epsilon$. If yes, renormalize $\pi_i \gets \pi_i / (\sum_{i=1}^k\pi_i)$. If no, add the tuple  $(\varnothing, 1-\sum_{i=1}^k \pi_i)$ to $\mathcal{T}$. Only keep the tuples with strictly positive mixing coefficients i.e. $\mathcal{T}\gets \{(\bm{t}_i, \pi_i) \in \mathcal{T} : \pi_i > 0\}$. \Return $\mathcal{T}$.
    
    \item Let $V_1\prec\ldots\prec V_{N+1}$ denote a topological order in $\mathcal{G}$. Marginalize on $V_{N+1}$ to create access to $\hat{\mathbb{P}}_{mix}(\bm{V}_N)$ and $\hat{\mathbb{P}}(\bm{V}_N)$ where $\bm{V}_N = (V_1, \ldots,V_N)$. Construct $\mathcal{G}_N = \mathcal{G}\setminus \{V_{N+1}\}$. Recursively call this algorithm with inputs $\mathcal{G}_N$, $\hat{\mathbb{P}}(\bm{V}_N)$, $\hat{\mathbb{P}}_{mix}(\bm{V}_N)$, and obtain a set of intervention tuples $\mathcal{S} = \{(\bm{s}_1, \mu_1), \ldots, (\bm{s}_q, \mu_q)\}$. Let $\bm{s}_1, \ldots, \bm{s}_q$ be ordered such that $i\leq j$ implies that $\bm{s}_j \not\subseteq \bm{s}_i$. For all $i\in [N]$, by inspecting $\bm{s}_j$, identify $\Bar{v}_i\in C_{V_i}$ such that $\Bar{v}_i\notin \bm{s}_j$ for any $j\in [q]$.
    Define $\bm{s}_{-j} = \{\Bar{v}_i : V_i\notin S_j\}$. Let $C_{V_{N+1}} = \{v^1, \ldots, v^k\}$. For each $i\in[q]$ and $l\in [k]$, create setting $\bm{v}_{i,l} = \bm{s}_i\cup \bm{s}_{-i}\cup\{v^l\}$. 
    
    \item For each fixed $i\in [q]$, evaluate distributions for different $\bm{v}_{i,l}$, $l\in [k]$, to setup the system of equations (say $\bm{Ax}=\bm{b}$) described in Equation \ref{equation:further-reduced-si}. Similar to the technique described in Lemma \ref{lemma:si-choice} (which in turn uses Lemma \ref{lemma:sys-eqn}), set one variable to $0$ at a time giving solution $(\pi_{\bm{s}_i\cup\{v^1\}}, \ldots, \pi_{\bm{s}_i\cup\{v^k\}})$ corresponding to targets $(\bm{s}_i\cup\{v^1\},\ldots,\bm{s}_i\cup\{v^k\})$ as described in Section \ref{subsection:induction-step}. For every variable that is set to $0$, create a set $\mathcal{T} = \{(\bm{s}_i\cup\{v^{l^\prime}\}, \pi_{\bm{s}_i\cup\{v^{l^\prime}\}}):{l^\prime}\in [k]\}$ containing the solution. For every such $\mathcal{T}$, iterate through the tuples in it. If some $\pi_{\bm{s}_i\cup\{v^{l^\prime}\}} < 0$, set $\pi_{\bm{s}_i\cup\{v^{l^\prime}\}}\gets 0$. Compute the score $r(\mathcal{T}) = \|\bm{A\pi}-\bm{b}\|^2$, where $\bm{\pi} = (\pi_{\bm{s}_i\cup\{v^1\}}, \ldots, \pi_{\bm{s}_i\cup\{v^k\}})$. Next, select $\mathcal{T}$ with the smallest value of $r(\mathcal{T})$. For this selected $\mathcal{T}$, check if $\mu_i-\sum_{{l^\prime}=1}^k\pi_{\bm{s}_i\cup\{v^{l^\prime}\}} < \epsilon$. If yes, re-normalize $\pi_{\bm{s}_i\cup\{v^{l^\prime}\}} \gets (\mu_i \times \pi_{\bm{s}_i\cup\{v^{l^\prime}\}}) / (\sum_{{l^{\prime\prime}}=1}^k\pi_{\bm{s}_i\cup\{v^{l^{\prime\prime}}\}})$. If no, add the tuple  $(\bm{s}_i, \mu_i-\sum_{{l^\prime}=1}^k \pi_{\bm{s}_i\cup\{v^{l^\prime}\}})$ to $\mathcal{T}$. At the end of this process, collect all the intervention tuples thus obtained (for all $i\in [q]$), in the set $\mathcal{T}$.
    
    \item Find the excluded value of node $V_{N+1}$, i.e. the value which is not present in any target in $\mathcal{T}$. If no such value exists, find $v\in C_{V_{N+1}}$ which minimizes $\sum_{i=1}^q \pi_{\bm{s}_i\cup\{v\}}$. For each $i\in [q]$, set $\pi_{\bm{s}_i\cup\{v\}}\gets 0$. For each $i\in [q]$, renormalize the mixing coefficients $\pi_{\bm{s}_i\cup\{v^{l^\prime}\}} \gets (\pi_{\bm{s}_i\cup\{v^{l^\prime}\}}\times \mu_i)/(\sum_{{l^{\prime\prime}}=1}^ k\pi_{\bm{s}_i\cup\{v^{l^{\prime\prime}}\}})$. Only keep the tuples with strictly positive mixing coefficients in $\mathcal{T}$ i.e. $\mathcal{T}\gets \{(\bm{s}, \pi_{\bm{s}})\in \mathcal{T} : \pi_{\bm{s}} > 0\}$. \Return $\mathcal{T}$
\end{enumerate}
\end{algorithm}

\section{Additional Simulations}
\label{appendix:experiments}
\subsection{Effect of Graph Size}
\label{subsec:graph_size}

\begin{figure*}[ht!]
\centering
\includegraphics[width=\textwidth,height=0.45\textwidth]{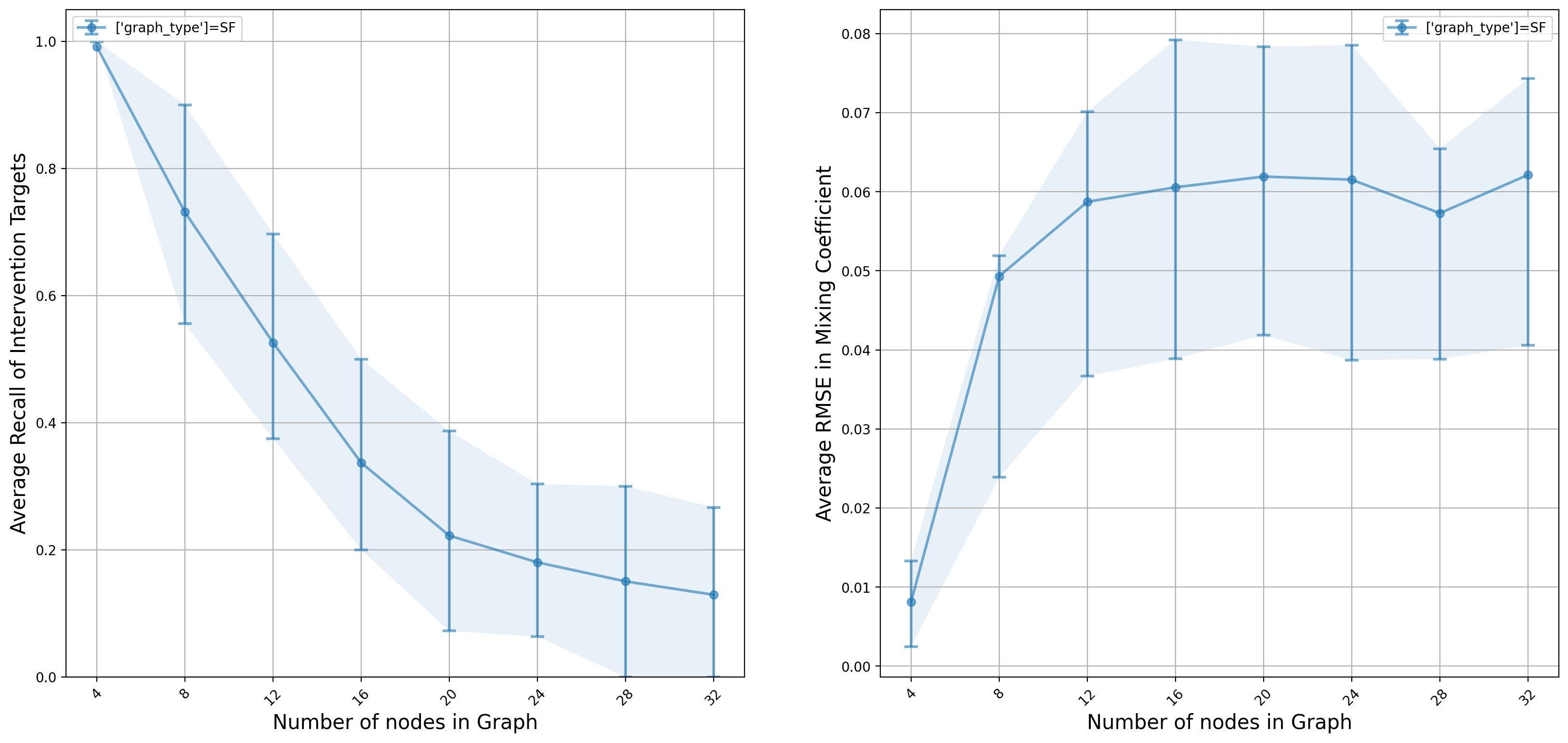}
\caption{Performance of Algorithm \ref{algo:general-algo} as a function of number of nodes in the graph. The error bars show the $20\%$ and $80\%$ quantile respectively.}
\label{fig:node-variation}
\end{figure*}

Figure \ref{fig:node-variation} shows the variation of performance of Algorithm \ref{algo:general-algo} keeping the number of samples fixed at $\sim 10^6$. We observe that recall decreases and root-mean-squared error in mixing coefficient increases very quickly as the number of nodes increases in the graph. Even though this is expected since error is accumulated as we successively add nodes and find new intervention targets, such performance for a very large sample size indicates bad dependence of sample complexity on the number of nodes. Improving this needs more exploration and is left for future work.

\subsection{Effect of Graph Type}
\label{subsec:graph_type}

\begin{figure*}[ht!]
\centering
\includegraphics[width=\textwidth,height=0.45\textwidth]{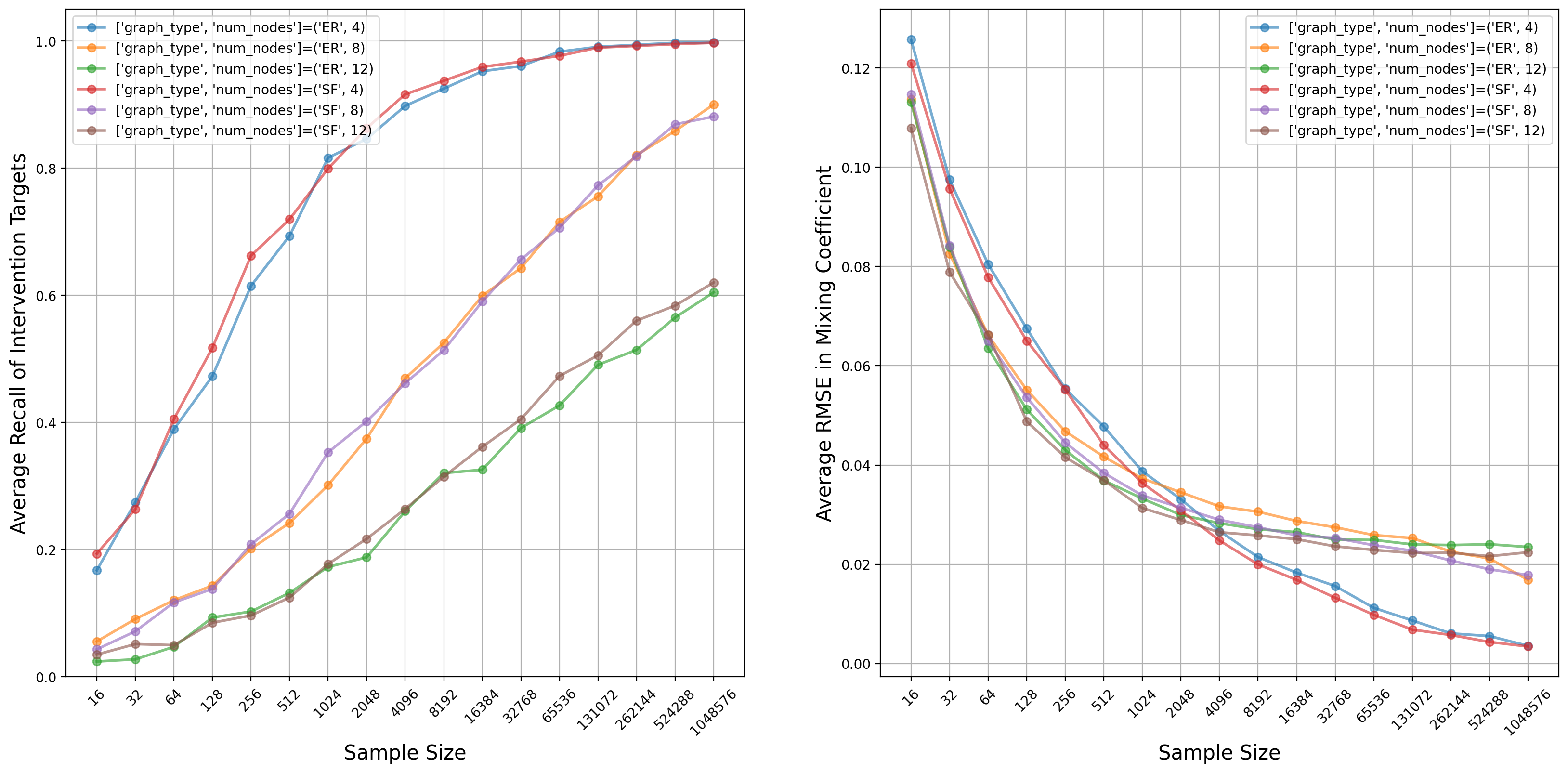}
\caption{Comparison of performance of Algorithm \ref{algo:general-algo} for CBNs generated from Erd\"os-R\'enyi (ER) and Scale Free (SF) models.}
\label{fig:graph-comparison}
\end{figure*}

In Figure \ref{fig:graph-comparison}, we demonstrate performance of Algorithm \ref{algo:general-algo} for CBNs generated from two different family of random graphs (Erd\"os-R\'enyi (ER) and Scale-Free (SF)). We observe no significant difference in performance for these models and make a conjecture that only high level graph parameters (such as number of nodes, edges, in-degree etc.) might be having an impact on performance and the topology (given these parameters) might not be that crucial.

\section{Evaluation Metrics}
\label{appendix:exp-metrics}
Let $\mathcal{T}$ denote the actual set of intervention targets and $\widehat{\mathcal{T}}$ denote the set of intervention targets computed by our algorithm. Let $\pi_{\bm{t}}$, $\widehat{\pi}_{\bm{s}}$ denote mixing coefficients of target $\bm{t}$, $\bm{s}$ in $\mathcal{T}$ and $\widehat{\mathcal{T}}$ respectively. We use the following evaluation metrics to evaluate the performance of our algorithm.
\begin{enumerate}
    
    \item \textbf{Recall}: Proportion of number of targets in $\mathcal{T}$ that were correctly identified in  $\mathcal{\widehat{T}}$ 
    \begin{equation*}
        \text{Recall} = \frac{|\mathcal{T}\cap\mathcal{\widehat{T}}|}{|\mathcal{T}|}.
    \end{equation*}
    
    \item \textbf{Root Mean Squared Error}: Root-mean-squared error (RMSE) in the mixing coefficients.
    \begin{equation*}
        \text{RMSE} = \sqrt{\frac{
            \begin{aligned}
                 \sum_{\bm{t}\in \mathcal{T}\cap \mathcal{\widehat{T}}}(\pi_{\bm{t}}-\widehat{\pi}_{\bm{t}})^2 + \sum_{\bm{t} \in (\mathcal{T}\setminus \mathcal{\widehat{T}})} (\pi_{\bm{t}})^2
                 +\sum_{\bm{t}\in (\mathcal{\widehat{T}}\setminus \mathcal{T})} (\widehat{\pi}_{\bm{t}})^2 \\
            \end{aligned}
        }{
            |\mathcal{T} \cup \mathcal{\widehat{T}}|}
        }
    \end{equation*}
    
    \item \textbf{False-Positive RMSE}: RMSE in the mixing coefficients of the incorrectly identified targets.
    \begin{equation*}
        \text{FP-RMSE} = \sqrt{\frac{\sum_{\bm{t} \in (\mathcal{\widehat{T}}\setminus \mathcal{T})} (\widehat{\pi}_{\bm{t}})^2}{|\mathcal{\widehat{T}} \setminus \mathcal{T}|}}
    \end{equation*}
    
    \item \textbf{False-Negative RMSE}: RMSE in the mixing coefficients of targets not identified.
    \begin{equation*}
        \text{FN-RMSE} = \sqrt{\frac{\sum_{\bm{t} \in (\mathcal{T}\setminus \mathcal{\widehat{T}})} (\pi_{\bm{t}})^2}{|\mathcal{T} \setminus \mathcal{\widehat{T}}|}}
    \end{equation*}
\end{enumerate}

\end{document}